\documentclass{article}

 \usepackage[preprint]{neurips_2026}

\usepackage[utf8]{inputenc} %
\usepackage[T1]{fontenc}    %
\usepackage{hyperref}       %
\usepackage{url}            %
\usepackage{booktabs}  
\usepackage{multirow}%
\usepackage{amsfonts}       %
\usepackage{nicefrac}       %
\usepackage{microtype}      %
\usepackage{xcolor}         %
\usepackage{xspace}
\usepackage{enumitem}
\usepackage{graphicx}
\usepackage{algorithm}
\usepackage{algpseudocode}
\usepackage{subcaption}
\usepackage{placeins}

\usepackage{amsmath,amsfonts,bm}

\def\eqref#1{equation~\ref{#1}}

\def\1{\bm{1}}

\def\vtheta{{\bm{\theta}}}

\def\mM{{\bm{M}}}

\def\mS{{\bm{S}}}
\def\mT{{\bm{T}}}

\DeclareMathAlphabet{\mathsfit}{\encodingdefault}{\sfdefault}{m}{sl}
\SetMathAlphabet{\mathsfit}{bold}{\encodingdefault}{\sfdefault}{bx}{n}

\DeclareMathOperator*{\argmax}{arg\,max}

\usepackage{amsthm}
\theoremstyle{plain}

\newtheorem{proposition}{Proposition}

\theoremstyle{definition}
\newtheorem{definition}{Definition}
\newtheorem{problem}{Problem}

\theoremstyle{remark}
\newtheorem{remark}{Remark}

\newif\ifcomments
\commentstrue

\usepackage{pifont}
\usepackage{amssymb}
\usepackage{comment}

\everypar{\looseness=-1}

\usepackage[dvipsnames]{xcolor}
\usepackage{hyperref}
\hypersetup{
     colorlinks = true,
     linkcolor = BrickRed,
     anchorcolor = blue,
     citecolor = Blue,
     filecolor = blue,
     urlcolor = black
}

\title{Understanding Layer Patching in\\Model Size Interpolation}

\author{%
Sara Kangaslahti\textsuperscript{1},\;
Jonathan Geuter\textsuperscript{1}\thanks{Equal contribution.},\;
Nihal V.~Nayak\textsuperscript{1,2}\footnotemark[1],\;
Marco Fumero\textsuperscript{3}\footnotemark[1],\\
\textbf{Francesco Locatello}\textsuperscript{3},\;
\textbf{David Alvarez-Melis}\textsuperscript{1,2}\\
\textsuperscript{1}Harvard University \quad
\textsuperscript{2}Kempner Institute \quad
\textsuperscript{3}IST Austria\\
\texttt{sarakangaslahti@g.harvard.edu}
}

\begin{document}

\maketitle
\begin{abstract}

Zero-shot model size interpolation aims to create new models of intermediate target sizes by combining existing models without additional training. 
Recent work on boomerang distillation~\citep{kangaslahti2026boomerang} shows that a student language model distilled from a larger teacher can be expanded by iteratively \emph{patching} its layers, replacing student layers with contiguous blocks of teacher layers to obtain models whose size and performance interpolate between the student and the teacher.
Selecting which layers to patch for a given intermediate model size is a key design choice of this procedure, yet it has remained largely underexplored.
In this work, we provide the first systematic study of student-layer selection for model size interpolation.
We cast finding the optimal layer subset for each model size as an optimization problem and prove it can be viewed as a shortest-path problem in a certain acyclic graph.
In experiments, we show that patching strongly shapes interpolation behavior, with effects that vary substantially across model families. We find that simple sequential strategies---patching either from the first layer to the last or from the last to the first---often achieve surprisingly strong performance in practice. We further introduce KLPatch, a greedy patching algorithm based on KL divergence, which often improves over last-to-first patching and approximately solves the optimization problem.
Together, our results provide a principled understanding of how layer patching affects model size interpolation and offer practical guidance for constructing near-optimal interpolated models.

\looseness-1

\looseness-1
\end{abstract}

\section{Introduction}
\vspace{-0.2cm}
Creating LLM families with different model sizes is one of the most practical ways to adapt to users' varying compute constraints~\citep{huyen2022designing,khandelwal2025k,team2026kimi}.
However, pre-training LLMs of different sizes requires separate training runs, which often results in the release of only a small number of coarse-grained, fixed-size LLMs~\citep{grattafiori2024llama,yang2025qwen3,qwen35blog}. 
Recent work on boomerang distillation~\citep{kangaslahti2026boomerang} has explored zero-shot model-size interpolation, enabling the creation of LLM families with fine-grained sizes. 
Boomerang distillation is a procedure in knowledge distillation in which layers of a student LLM can be patched with contiguous blocks of teacher layers to create intermediate-size models whose performance smoothly interpolates between that of the student and the teacher LLM, without requiring additional training.
Boomerang distillation involves several critical choices, such as the student initialization, training token budget, alignment losses, and student patching. 
Among these, the role of student patching, a key user-facing decision with a combinatorial design space, remains poorly understood.
\looseness-1

Prior work on boomerang distillation provides limited guidance on how to patch student models for optimal interpolation performance.
\citet{kangaslahti2026boomerang} consider only two patching strategies: patching from the first layer to the last and patching from the last layer to the first.
Although they show that some LLMs prefer one order over the other, the broader design space of patching remains largely unexplored.
Moreover, in Section \ref{sec:patching-order-permutations}, we show that naively increasing the size of an interpolated model by arbitrarily patching the student can degrade performance, contradicting the conventional view that larger models perform better~\citep{sutton2019bitter,kaplan2020scaling,hoffmann2022training}.
These observations motivate a deeper understanding of patching to identify a generalizable recipe for constructing reliable interpolated models.

In this work, we provide a comprehensive understanding of student patching for model size interpolation. We define student patching as an optimization problem over interpolations (Section~\ref{sec:interpolation-optimization}), including those that need not correspond to nested subsets of model layers, moving beyond the setup of \citet{kangaslahti2026boomerang}. As its computational complexity makes it infeasible to solve, we introduce an alternative shortest-path problem (Section~\ref{sec:patching-orders}) which we empirically and theoretically show approximates the optimal model interpolation.
In experiments, we exhaustively search over all possible interpolations in small language models~\citep{devlin2019bert,radford2019language} such as DistilBERT and DistilGPT to find the globally optimal model size interpolation (Section~\ref{sec:experiment:distilbert}).
Next, we extend our insights from small language models to larger models such as Qwen and Pythia~\citep{biderman2023pythiasuiteanalyzinglarge,yang2025qwen3}, and run an extensive study over 200 randomly sampled patching orders for each model (Section~\ref{sec:experiment:patching_order}).
Finally, we propose KLPatch (Section~\ref{sec:klpatch}), a greedy KL divergence-based patching algorithm that approximately solves the shortest-path problem, reducing the computational complexity for finding performant interpolations from $O(2^{N})$ to $O(N^2)$, where $N$ is the number of layers in the student model.
\looseness-1

Our experiments reveal valuable insights into how patching order drives model size interpolation.  
Our exhaustive search over all possible patching orders in DistilBERT and DistilGPT2 shows that, remarkably, patching from the last student layer to the first produces the globally optimal interpolation for DistilBERT and a top-5\% interpolation for DistilGPT2.
For LLMs, the patching recipe from~\citet{kangaslahti2026boomerang} is not always optimal or near-optimal but remains competitive, and shortest-path solutions tend to perform similarly to optimal interpolations.
Finally, our experiments show that KLPatch often outperforms first-to-last and last-to-first patching and finds near-optimal interpolations across several language models, making it a compute-efficient method for creating highly performant model interpolations. 
Overall, our findings establish patching order as a key design choice for building interpolated models.

Our work makes the following contributions:
\begin{itemize}[leftmargin=*]
\item \textbf{Formalization of model interpolation.} We formalize student patching in boomerang distillation as an optimization problem over interpolation curves (Section~\ref{sec:interpolation-optimization}), and derive an alternative shortest-path problem, which we empirically and theoretically justify (Sections~\ref{sec:patching-orders}, \ref{sec:patching-order-permutations})

\item \textbf{Phenomenology of patching order.} We provide the first thorough study of student patching. In particular, we perform exhaustive sweeps over \textit{all} interpolations in DistilBERT and DistilGPT (Section~\ref{sec:experiment:distilbert}), and 200 random patching orders on Qwen3 4B, Qwen3 8B, and Pythia 6.9B (Section~\ref{sec:experiment:patching_order}), showing that simple last-to-first patching sometimes remains competitive, while optimal permutations obtained from solving the shortest-path problem closely approximate the optimal interpolation.
\looseness-1

\item \textbf{Practical patching algorithm.} We introduce KLPatch, an efficient greedy algorithm that approximately solves the interpolation optimization problem via an interpolation graph (Section~\ref{sec:klpatch-algo}), reducing the computational complexity from $O(2^N)$ to $O(N^2)$ and empirically yielding model interpolations close to the optimum (Section~\ref{sec:experiment:klpatch}).

\looseness-1
\end{itemize}

\begin{figure}
    \centering
    \includegraphics[width=1\linewidth]{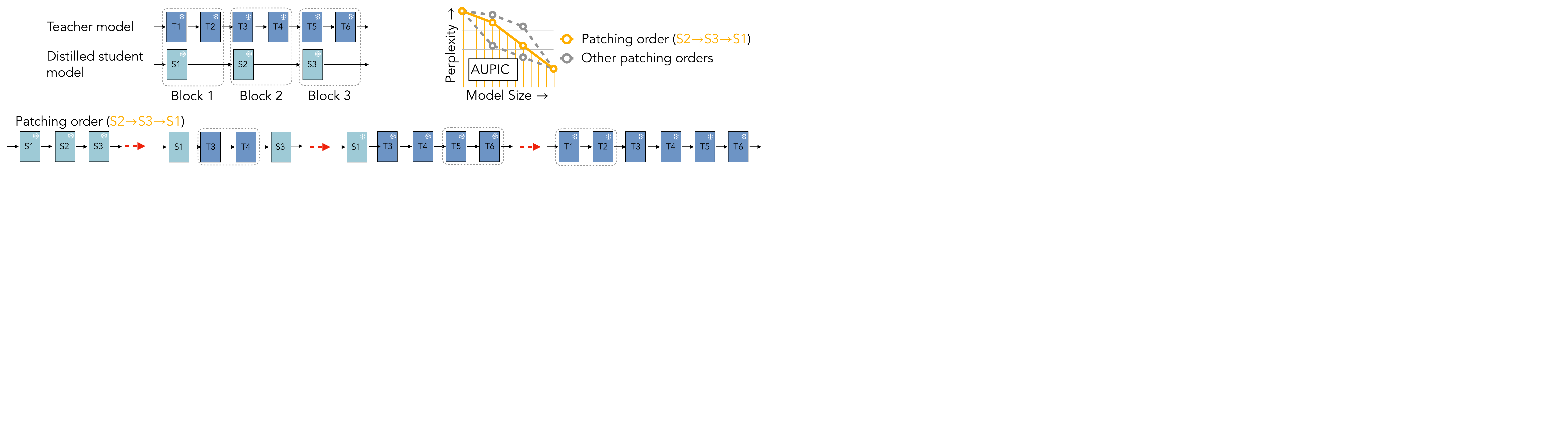}
    \caption{\textbf{Patching order in model size interpolation.}
    We show boomerang distillation with a six-layer teacher and a three-block distilled student, where each student block corresponds to a contiguous pair of two-layer teacher blocks. 
    A patching order specifies the sequence in which student layers are replaced by their corresponding teacher blocks, inducing a trajectory of intermediate-size models from the student to the teacher. 
    Here, the order $S2 \rightarrow S3 \rightarrow S1$ first replaces the middle student block, then the last block, and finally the first block. 
    Different patching orders induce different interpolation curves, which we summarize using the area under the perplexity interpolation curve (AUPIC).\looseness-1}
    \vspace{-0.45cm}
    \label{fig:main-patching-overview}
\end{figure}
\vspace{-0.2cm}
\section{Related Work}
\vspace{-0.2cm}

\paragraph{Model Interpolation}
Model interpolation is a technique where the weights of multiple models are combined to produce new capabilities without any additional training~\citep{frankle2020linear,singh2020model,ainsworth2022git,yang:acm26}.
Several works~\citep{entezari2021role,nagarajan2019uniform} show that when two neural networks are initialized identically and share a part of the optimization trajectories, their \emph{modes} are linearly connected, enabling linear interpolation between them.
However, these explanations are limited to model interpolation of the same size.
In contrast, we aim to understand interpolation between models of different sizes. 
Recent work has explored zero-shot model size interpolation, which aims to interpolate between models of different sizes~\citep{cai2025llamaflex,kangaslahti2026boomerang}.
~\citet{cai2025llamaflex} show that explicitly training a Gumbel Softmax-based router allows them to interpolate between models of different sizes. 
More recently, ~\citet{kangaslahti2026boomerang} find that we can achieve zero-shot model size interpolation without training additional routers via knowledge distillation (Section \ref{sec:background}). 
We extend their work by comprehensively examining the user-facing choice of patching and its effects on model size interpolation. 

\looseness-1

\paragraph{Pruning}
Model pruning aims to compress model parameters by removing redundant parameters to improve inference efficiency while maintaining the full model performance~\citep{lecun1989optimal,han2015learning,sun2023simple,xia2023sheared,sreenivas2024llm,hou2025instruction}.
Analogous to student patching order is a model pruning technique called layer dropping, where the goal is to drop layers based on a computationally efficient criterion, such as activation similarity~\citep{men2024shortgpt,hinostroza2026rethinking} and statistical distance~\citep{zhang2024finercut}, so that the drop in performance is minimal. 
Our greedy KL patching algorithm (see Algorithm ~\ref{alg:greedy-patching-order}) closely relates to the iterative layer pruning algorithm in ~\citet{zhang2024finercut}. 
Although both rely on KL divergence (or a variant) between the larger and smaller models' token distributions to assess layer importance, ~\citet{zhang2024finercut} uses this information to remove layers, whereas we use it to patch layers and thereby construct interpolated models.
Furthermore, in Section \ref{sec:klpatch}, we show that the greedy KL divergence algorithm for patching often recovers a near-optimal patching trajectory.

\paragraph{Knowledge Distillation.}
Knowledge distillation is a well-known technique to create smaller models that mimic the behavior of larger models~\citep{hinton2015distilling,sanh2019distilbert,muralidharan2024compact,mansourian2025a}.
Prior work has used knowledge distillation to create compact language models such as DistilBERT~\citep{sanh2019distilbert} by training them on a corpus of text using the knowledge distillation objective~\citep{hinton2015distilling}.
~\citet{kangaslahti2026boomerang} showed that these distilled models exhibit zero-shot model size interpolation without any additional training.
We build on their initial experiment by creating interpolated models using DistilBERT and DistilGPT and patching the student in all possible ways to identify the optimal patching order.  
\looseness-1

\vspace{-0.2cm}
\section{Boomerang Distillation}\label{sec:background}
\vspace{-0.2cm}

 Our work builds on \emph{boomerang distillation}~\citep{kangaslahti2026boomerang}, a recent technique for  model size interpolation. Starting from a teacher LLM and a smaller student distilled from 
it, boomerang distillation produces a chain of intermediate-size models that smoothly interpolate from the student to the teacher without any additional training (Figure~\ref{fig:main-patching-overview}). The 
procedure has three steps, which we describe below: (1) student initialization, (2) knowledge distillation, and (3) student patching. \looseness-1

 \paragraph{Notation.}
 Let the teacher $\mT$ have $M$ transformer layers and the student $\mS$ from the same family have $N<M$ layers. Both models consist of an embedding layer, a sequence of transformer blocks, and an LM head, and
 all blocks produce hidden states of the same dimension.

\vspace{-0.2cm}
\paragraph{Step 1: Student Initialization.}\label{sec:boomerang_distillation:student_initialization}
 The student is initialized by partitioning the teacher's $M$ transformer layers into $N$ contiguous blocks; each student block is then initialized from the first teacher layer in the corresponding 
block~\citep{kangaslahti2026boomerang}. The student's embedding layer and LM head are copied from the teacher.
\looseness-1

\paragraph{Step 2: Knowledge Distillation and Alignment.}\label{sec:boomerang_distillation:knowledge_distillation}
 The initialized student is trained on a corpus such as the Pile~\citep{gao2020pile800gbdatasetdiverse} using a combination of knowledge distillation~\citep{hinton2015distilling} and a block-level alignment 
loss such as the cosine distance loss~\citep{sanh2019distilbert}, which encourages each student block to mimic the hidden state at the boundary of its corresponding teacher block. Concretely, given a token 
sequence $x=(x_1,\dots,x_L)$ and a position $j$, let $p_\mT(\cdot\mid x_{<j})$ and $p_\mS(\cdot\mid x_{<j})$ denote the teacher's and student's next-token distributions. The per-token loss is
 \begin{equation*}
 \mathcal{L}_{\vtheta_\mS}(x_j) = \mathrm{CE}(x_j\mid x_{<j};\vtheta_\mS) \;+\; \lambda_{\mathrm{KL}}\,\mathrm{KL}\!\left(p_\mT(\cdot\mid x_{<j})\,\big\|\,p_\mS(\cdot\mid x_{<j})\right) \;+\; 
\lambda_{\cos}\sum_{i=1}^{N}\mathcal{L}_{\cos}^{(i)}(x_{<j};\vtheta_\mT,\vtheta_\mS),
 \end{equation*}
 where $\mathrm{CE}$ is the cross-entropy loss against the ground-truth next token, $\mathcal{L}_{\cos}^{(i)}$ is the cosine distance between the student's hidden state at the boundary of block $i$ and the 
corresponding teacher hidden state, and $\lambda_{\mathrm{KL}},\lambda_{\cos}>0$ weigh the distillation and alignment terms.

\paragraph{Step 3: Student Patching.}\label{sec:boomerang_distillation:student_patch}

 After training, intermediate-size models are constructed by \emph{patching}---replacing student layers with the corresponding teacher blocks. For each subset $A\subseteq\{1,\dots,N\}$, we write $\mM_A$ for 
the model obtained by patching in the teacher blocks aligned with the indices in $A$, leaving the remaining $N-|A|$ student blocks intact. The endpoints recover the student and teacher, $\mS=\mM_\emptyset$ and
 $\mT=\mM_{\{1,\dots,N\}}$, and growing $A$ one block at a time produces a chain of intermediate models linking the two without any additional training (Figure~\ref{fig:main-patching-overview}). The order in 
which student blocks are patched determines the trajectory through this chain, and is the central object of study in the remainder of the paper.
\looseness-1

 \section{Layer Patching as a Shortest-Path Problem}\label{sec:model-patching}
 \vspace{-0.2cm}
 We formalize student patching as a combinatorial optimization problem with two formulations---one over subsets of student layers, one over permutations thereof---which collapse algorithmically under greedy 
sequential construction (Section~\ref{sec:interpolation-optimization}). We then introduce metrics for evaluating the resulting interpolation curves (Section~\ref{sec:evaluation-metric}) and recast the 
optimization as a shortest-path problem on a KL-weighted graph (Section~\ref{sec:patching-orders}), motivating the KLPatch algorithm of Section~\ref{sec:klpatch}.

 \subsection{A Combinatorial Optimization Perspective}\label{sec:interpolation-optimization}
 \vspace{-0.2cm}
 Boomerang distillation requires choosing \emph{which} student layers to patch with teacher blocks at each interpolated size, and we observe in Section~\ref{sec:patching-order-permutations} that this choice 
strongly determines downstream performance. We make this choice precise as a combinatorial optimization problem.
 
 The choice of which layers to patch decomposes by interpolated size: for each cardinality $k \in \{1, ..., N\}$, there are $\binom{N}{k}$ candidate subsets $A \subseteq \{1, ..., N\}$, each inducing a 
partially patched model $\mM_A$, and the best one depends on the chosen quality metric. Formally, fix an evaluation function $f_{\mS, \mT}: 2^{\{1, ..., N\}}\to\mathbb{R}$ that scores each subset $A$ by the 
quality of $\mM_A$, with higher meaning better.\footnote{If lower is better, as in the case of perplexity, $f_{\mS,\mT}$ is negated to recover a maximization problem.} The best $k$-subset then maximizes 
$f_{\mS, \mT}(A)$ subject to $|A|=k$; collecting these maximizers across all $k$ defines an \textit{optimal interpolation} between $\mS$ and $\mT$.
 
 \begin{problem}[Optimal Interpolation]\label{prob:optimal-interpolation}
     Given a student $\mS$, a teacher $\mT$, and an objective function $f_{\mS, \mT}: 2^{\{1, ..., N\}}\to\mathbb{R}$, the \textit{optimal interpolation} between $\mS$ and $\mT$ with respect to $f_{\mS,\mT}$ 
is the sequence of models $\mS=\mM_{A_0}, \mM_{A_1}, ..., \mM_{A_{N-1}}, \mM_{A_N}=\mT$ defined by
     \begin{equation*}
         A_k = \argmax_{A\subseteq \{1,...,N\},\,|A|=k} f_{\mS, \mT}(A).
     \end{equation*}
 \end{problem}
 
 As $\binom{N}{k}$ subsets exist for each $k$, finding the optimal interpolation requires evaluating all $\sum_{k=1}^{N-1}\binom{N}{k}=2^N-2$ proper subsets of $\{1,\ldots,N\}$, which is infeasible in practice
 since each evaluation of $f_{\mS,\mT}$ requires evaluating an interpolated model. An alternative formulation searches over permutations of layers rather than subsets, replacing the question 
``which $k$ layers do we patch, for all $k$?'' with ``in what order do we patch all $N$ layers?'' A permutation of layers naturally gives rise to an interpolation of models by patching layers sequentially according to the permutation.
 
 \begin{problem}[Optimal Permutation]\label{prob:optimal-permutation}
     Given a student $\mS$, a teacher $\mT$, and an objective function $f_{\mS, \mT}: 2^{\{1, ..., N\}}\to\mathbb{R}$, the \textit{optimal permutation} $\pi^*\in\mathfrak S_N$ of layers of $\mS$ with respect 
to $f_{\mS, \mT}$ is
     \begin{equation*}
         \pi^* = \argmax_{\pi\in\mathfrak S_N} \sum_{k=0}^N f_{\mS, \mT}(A^\pi_k),
         \vspace{-0.2cm}
     \end{equation*}
     where $A^\pi_k \triangleq \{i : \pi(i) \le k\}$.
 \end{problem}
 
 The permutation search space is in fact larger ($|\mathfrak S_N|=N!$ versus $2^N-2$), so Problem~\ref{prob:optimal-permutation} is no easier \textit{a priori} than Problem~\ref{prob:optimal-interpolation}. But we will see in Section~\ref{sec:patching-orders} 
that it admits a clean shortest-path interpretation with theoretical guarantees. More immediately, both formulations admit a natural greedy attack: grow the patched set one layer at a time, a strategy that underlies many tractable 
heuristics in combinatorial optimization, in some cases with formal approximation guarantees, e.g., for submodular maximization~\citep{nemhauser1978analysis}. Applied to either Problem~\ref{prob:optimal-interpolation} or 
Problem~\ref{prob:optimal-permutation}, greedy addition produces the same object---a nested chain of subsets, equivalently a permutation---so the two formulations collapse algorithmically under this lens, 
though they differ in scoring: Problem~\ref{prob:optimal-interpolation} evaluates each $A_k$ separately while Problem~\ref{prob:optimal-permutation} aggregates across $k$. We instantiate this strategy in 
Section~\ref{sec:klpatch} (KLPatch). Empirically, we observe that solutions to Problem~\ref{prob:optimal-permutation} closely track solutions to Problem~\ref{prob:optimal-interpolation} in terms of perplexity and downstream accuracy (Appendix~\ref{apx:patching-order-vs-interpolation}).

 \subsection{Evaluating Model Size Interpolation Trajectories}\label{sec:evaluation-metric}
 \vspace{-0.2cm}
 Different patching orders trace out very different interpolation curves of model performance versus model size (e.g., Figure~\ref{fig:all-distilbert-distilgpt}); to compare orders we need a single number 
summarizing how well an interpolated family fills the gap between $\mS$ and $\mT$. We use the \emph{area under the interpolation curve} (over model size): a discrete sum of the evaluation function $f_{\mS,\mT}$ over the $N{+}1$
 interpolation points along a patching order $\pi$ (Fig.~\ref{fig:main-patching-overview}). Two variants follow from the natural sign of the underlying metric: the \textbf{area under the interpolation curve} (\textbf{AUIC}) for higher-is-better 
metrics (e.g., accuracy) and the \textbf{area under the perplexity interpolation curve} (\textbf{AUPIC}) for lower-is-better metrics (perplexity). For a permutation $\pi\in\mathfrak S_N$ with intermediate 
models $\mM_{A^\pi_k}$ and model size $|\mM_{A^\pi_k}|$,
 \begin{align*}
 \mathrm{AUIC}(\pi) &\triangleq \sum_{k=1}^{N} \left(|\mM_{A^\pi_k}|-|\mM_{A^\pi_{k-1}}|\right) \frac{g_{\mathrm{data}}(\mM_{A^\pi_k})+g_{\mathrm{data}}(\mM_{A^\pi_{k-1}})}{2}, \\
 \mathrm{AUPIC}(\pi) &\triangleq \sum_{k=1}^{N} \left(|\mM_{A^\pi_k}|-|\mM_{A^\pi_{k-1}}|\right)\frac{\mathrm{PPL}_{\mathrm{data}}(\mM_{A^\pi_k})+\mathrm{PPL}_{\mathrm{data}}(\mM_{A^\pi_{k-1}})}{2},
 \end{align*}
 where $g_{\mathrm{data}}$ denotes any higher-is-better task metric (e.g., accuracy). A log analogue $\mathrm{AUPIC}^{\log}(\pi)$ which replaces a sum over perplexities $\mathrm{PPL}_{\mathrm{data}}$ with a sum over log-perplexities $\log \mathrm{PPL}_{\mathrm{data}}$ will be useful in 
Section~\ref{sec:patching-orders}. Replacing data-perplexity with perplexity against samples drawn from $\mT$ yields teacher-relative variants $\mathrm{AUPIC}_\mT(\pi)$ and $\mathrm{AUPIC}^{\log}_\mT(\pi)$; 
these underpin the shortest-path characterization of Section~\ref{sec:patching-orders} (Proposition~\ref{prop:paths}).

\subsection{Patching Orders as Shortest Paths}\label{sec:patching-orders}
\vspace{-0.2cm}
Layer patching is typically done incrementally: each step adds one previously unpatched layer to the current patched set. The reachable configurations therefore form a Boolean lattice on $\{1,\ldots,N\}$, and patching
 orders correspond to directed paths through it from $\mS$ (no layers patched) to $\mT$ (all layers patched). AUPIC and its log/teacher variants are themselves sums of a per-state evaluation along such a path,
 so Problem~\ref{prob:optimal-permutation} reduces to a shortest-path problem on a suitably weighted version of the lattice. The remainder of this subsection formalizes this view and identifies the objective 
for which the reduction is exact.

 \begin{definition}[Interpolation Graph]\label{def:interpolation-graph}
 Let $\mathcal{G}$ be the Boolean lattice on $\{1,\ldots,N\}$, viewed as a directed acyclic graph: vertices are subsets $A\subseteq\{1,\ldots,N\}$, identified with the partially-patched models $\mM_A$; an edge
 connects $\mM_A$ to $\mM_{A\cup\{i\}}$ whenever $i\notin A$. The graph has $\binom{N}{k}$ vertices at depth $k$, with root $\mM_\emptyset = \mS$ and sink $\mM_{\{1,\ldots,N\}} = \mT$. Given a calibration 
dataset $\mathcal{D}_{\mathrm{cal}}$, assign each edge the weight\looseness-1
 \begin{equation*}
     w\bigl(\mM_A\rightarrow \mM_{A\cup \{i\}}\bigr)\triangleq\mathbb{E}_{x\sim \mathcal{D}_{\mathrm{cal}}}\,\mathrm{KL}\bigl(p_\mT(\cdot \mid x)\,\big\|\, p_{\mM_{A\cup\{i\}}}(\cdot\mid x)\bigr).
 \end{equation*}
 \end{definition}

  Each patching order $\pi\in\mathfrak{S}_N$ corresponds to a directed path $\gamma_\pi$ of length $N$ from $\mS$ to $\mT$ in $\mathcal{G}$, traversing the intermediate models $\{\mM_{A^\pi_k}\}_{k=0}^{N}$. Its
 total weight is
 \[
 E(\pi)\triangleq\sum_{e\in \gamma_\pi} w(e)
 \;=\; \sum_{k=0}^{N} \mathbb{E}_{x\sim\mathcal{D}_{\mathrm{cal}}}\,\mathrm{KL}\bigl(p_\mT(\cdot\mid x)\,\big\|\, p^\pi_k(\cdot\mid x)\bigr),
 \]
 where $p^\pi_k$ denotes the distribution induced by $\mM_{A^\pi_k}$. 
When interpolation sizes are equidistant, shortest paths in $\mathcal{G}$ are precisely the optimal permutations from Problem~\ref{prob:optimal-permutation}.

 \begin{proposition}[Shortest KL Paths are optimal]\label{prop:paths}
 Assume that for any permutation $\pi$, consecutive model interpolation sizes are equidistant.
 Then the shortest paths in \(\mathcal G\) are exactly the optimal permutations w. r. t. \(\mathrm{AUPIC}^{\log}_\mT(\pi)\) (to be understood as a set-equality if there are multiple minimizers):
 \[
 \arg\min_{\pi\in\mathfrak S_N} E(\pi)
 \;=\;
 \arg\min_{\pi\in\mathfrak S_N} \mathrm{AUPIC}^{\log}_{\mT}(\pi).
 \]
 \end{proposition}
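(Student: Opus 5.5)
The plan is to show that $E(\pi)$ and $\mathrm{AUPIC}^{\log}_\mT(\pi)$ differ by a positive scaling and an additive constant that do not depend on $\pi$. Once this holds, the two argmin sets over $\mathfrak S_N$ coincide as sets.

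\textbf{Step 1: rewrite the log-perplexity against teacher samples.} Interpret $\log\mathrm{PPL}_\mT(\mM_A)$ as the average per-token cross-entropy of $\mM_A$ on sequences drawn from $\mT$, with contexts $x$ distributed as in $\mathcal D_{\mathrm{cal}}$:
\[
\log\mathrm{PPL}_\mT(\mM_A)=\mathbb E_{x}\,\mathbb E_{y\sim p_\mT(\cdot\mid x)}\bigl[-\log p_{\mM_A}(y\mid x)\bigr].
\]
The standard decomposition of cross-entropy then gives
\[
\log\mathrm{PPL}_\mT(\mM_A)=\mathbb E_x\,H\bigl(p_\mT(\cdot\mid x)\bigr)+\mathbb E_x\,\mathrm{KL}\bigl(p_\mT(\cdot\mid x)\,\|\,p_{\mM_A}(\cdot\mid x)\bigr)=:H_\mT+D(A),
\]
where $H_\mT$ is independent of $A$, and therefore of $\pi$.

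\textbf{Step 2: use the equidistance assumption.} If consecutive sizes differ by a common $\Delta>0$, then
\[
\mathrm{AUPIC}^{\log}_\mT(\pi)=\frac{\Delta}{2}\sum_{k=1}^N\bigl(\ell_k+\ell_{k-1}\bigr),\qquad \ell_k=\log\mathrm{PPL}_\mT(\mM_{A^\pi_k}).
\]
This trapezoid sum equals $\Delta\bigl(\sum_{k=0}^N \ell_k-\tfrac12(\ell_0+\ell_N)\bigr)$. The endpoints are fixed for every $\pi$, since $A^\pi_0=\emptyset$ and $A^\pi_N=\{1,\dots,N\}$. Substituting Step 1 gives
\[
\mathrm{AUPIC}^{\log}_\mT(\pi)=\Delta\sum_{k=0}^N D(A^\pi_k)+C=\Delta\,E(\pi)+C,
\]
with $C$ independent of $\pi$. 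In $E(\pi)$ as written, the $k=0$ term $D(\emptyset)$ is likewise constant, and $D(\{1,\dots,N\})=0$. The edge-sum form of $E(\pi)$ omits only the root term, which is also a constant, so either convention works.

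\textbf{Step 3: conclude.} Because $\Delta>0$, the map $E\mapsto \Delta E+C$ is strictly increasing, so the minimizers agree. We also need the shortest paths of $\mathcal G$ to be exactly the $\gamma_\pi$. Every source-to-sink path in the Boolean lattice adds one element per step and has length $N$, so the map $\pi\mapsto\gamma_\pi$ is a bijection onto these paths.

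\textbf{Main obstacle.} The delicate part is Step 1. It requires fixing precisely what "perplexity against samples from $\mT$" means: the contexts must follow the same distribution as $\mathcal D_{\mathrm{cal}}$, and the next token must be sampled from $p_\mT$. This makes the cross-entropy equal to entropy plus KL in expectation. If $\mathrm{AUPIC}_\mT$ were instead defined on finite samples, the identity would hold only in expectation. I would therefore state this modelling convention explicitly before running the algebra.
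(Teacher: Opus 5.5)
Your proof is correct and follows essentially the same route as the paper's: the cross-entropy decomposition turns each $\mathbb E_x\,\mathrm{KL}$ term into $\log\mathrm{PPL}_\mT(\mM)-\log\mathrm{PPL}_\mT(\mT)$, and the equidistance assumption turns the trapezoid sum into $c\sum_{k=0}^N\ell_k$ minus the fixed endpoint terms. Hence $E$ and $\mathrm{AUPIC}^{\log}_\mT$ differ by a positive factor and a $\pi$-independent constant, and two further points of yours are left implicit in the paper: the edge-sum form of $E(\pi)$ differs from the $k=0,\ldots,N$ form only by the constant root term $D(\emptyset)$, and source-to-sink paths are in bijection with permutations.
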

 
Proposition~\ref{prop:paths} thus reduces an $N!$-way combinatorial search to a shortest-path problem on $\mathcal{G}$. Two assumptions distinguishing the proposition's setting from the metrics reported in our experiments—perplexity referenced to $\mT$ rather than to data, and on the log scale rather than the raw scale—are discussed in Appendix~\ref{apx:proofs}: 
Proposition~\ref{prop:reference-distribution} bounds the data-vs-teacher gap, and Proposition~\ref{prop:log-vs-raw} the log-vs-raw gap. We also discuss the assumption that model sizes are equidistant in Remark~\ref{rem:equidistant}. We note that in our experiments in Sections~\ref{sec:patching-order-permutations} and \ref{sec:klpatch}, we report a normalized version of the AUPIC; this amounts to rescaling by a constant factor and does not affect the optimality guarantee in Proposition~\ref{prop:paths}. Empirically, on the models we study, path length in $\mathcal{G}$ strongly correlates with 
data-AUPIC (Appendix~\ref{apx:kl-vs-perplexity}, Figures~\ref{fig:kl-path-aupic-correlation}~and~\ref{fig:kl-path-aupic-correlation-pile}). Section~\ref{sec:klpatch} exploits this characterization with a tractable greedy algorithm which finds paths in $\mathcal{G}$ that are often near-optimal.

While $\mathcal{G}$ could be populated with different choices of edge weights, such as perplexity or downstream accuracy, using KL to the teacher model provides several advantages: Proposition~\ref{prop:paths} draws a direct connection to the teacher-relative log-AUPIC, while computing the KL, unlike computing $\mathrm{AUPIC}^{\log}_{\mT}$, does not require sampling from $\mT$. 
Unlike downstream accuracy, KL is task-agnostic and provides a dense signal over the full predictive distribution. 
Finally, KL measures whether a patched model remains aligned to the teacher, unlike perplexity (however, we do find it strongly correlates with perplexity; see Appendix~\ref{apx:kl-vs-perplexity}, Figures~\ref{fig:kl-aupic-correlation} and \ref{fig:kl-aupic-correlation-pile}). 
In practice, KL performed better than cosine distance, which correlates poorly with downstream performance~\citep{zhang2024finercut,hinostroza2026rethinking}.

\vspace{-0.2cm}
 \section{Empirical Characterization of Layer Patching}\label{sec:patching-order-permutations}
 \vspace{-0.2cm}
 Section~\ref{sec:model-patching} cast the choice of patching order as a combinatorial optimization with a clean shortest-path characterization. We now characterize this optimization landscape empirically: 
Section~\ref{sec:experiment:distilbert} performs a full sweep over all possible interpolations on DistilBERT and DistilGPT2; Section~\ref{sec:experiment:similarities} relates AUPIC to Spearman's footrule 
distance from the optimum; and Section~\ref{sec:experiment:patching_order} samples and evaluates a subset of patching orders for larger LLMs, where full enumeration is infeasible.
 Across all three regimes, last-to-first patching is a consistently strong but not always optimal recipe, and the gap between naive baselines and the empirical optimum motivates the algorithm of 
Section~\ref{sec:klpatch}.

\looseness-1

\vspace{-0.2cm}
 \subsection{Full Sweep on DistilBERT and DistilGPT2}\label{sec:experiment:distilbert}
 We start with DistilBERT and DistilGPT2 because their small size ($N=6$ layers each) makes the entire space of $720$ patching orders—and indeed all $2^6=64$ partially-patched models—directly enumerable, 
allowing us to characterize interpolation behavior exhaustively rather than through sampling.\looseness-1

 \paragraph{Setup.}
 We use DistilBERT and DistilGPT2 \citep{sanh2019distilbert} as students, with the corresponding teacher checkpoints, BERT \citep{devlin2019bert} and GPT2 \citep{radford2019language}, all from Hugging Face. 
Both students have $N=6$ layers, yielding $6!=720$ patching orders each. For every order, we iteratively patch one layer at a time, evaluating pseudo-perplexity (DistilBERT) or perplexity (DistilGPT2) on 
Wikitext \citep{wikitext} at each of the $N+1$ interpolation points; aggregating across the trajectory gives AUPIC. We report the order with the lowest AUPIC as the empirical optimum.\looseness-1

\paragraph{Results.}

\begin{figure*}[!t]
  \centering
  \includegraphics[width=0.8\linewidth]{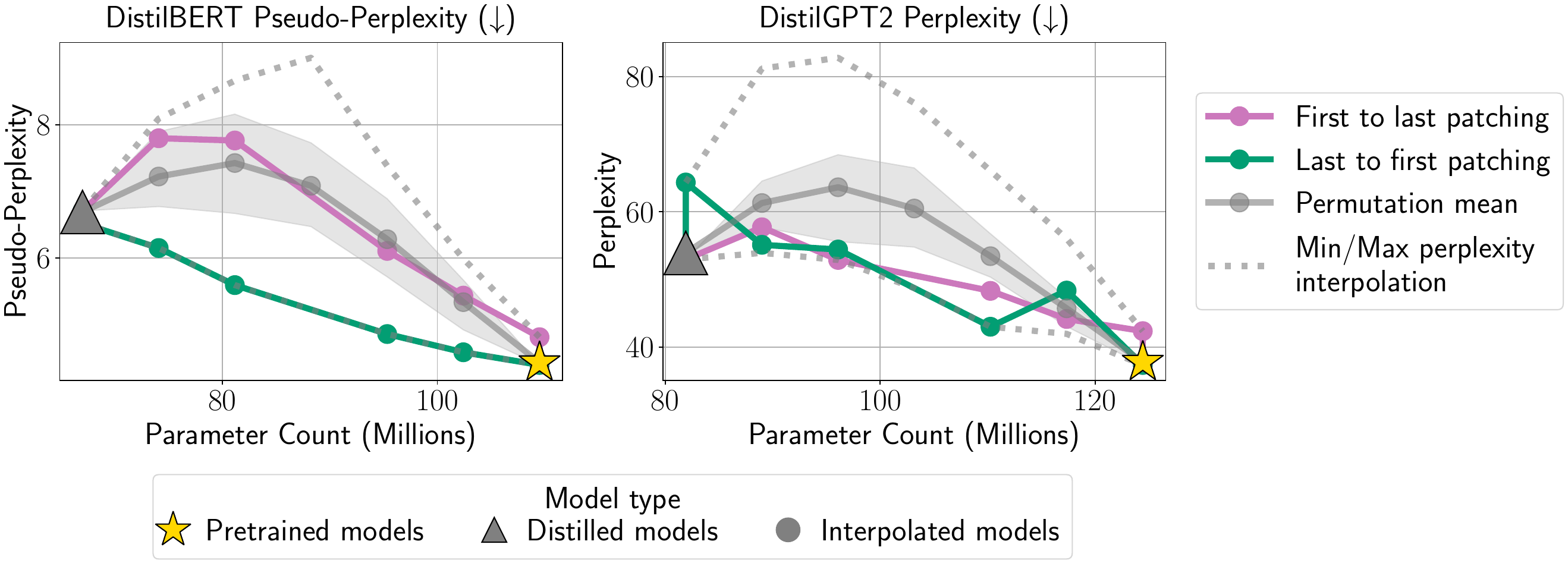}
  \vspace{-0.2cm}
   \caption{\textbf{All permutations of patching order for DistilBERT and DistilGPT2.} Patching order significantly affects interpolation performance, and the best ordering yields nearly linear interpolation 
in (pseudo-)perplexity between the students (DistilBERT, DistilGPT2) and teachers (BERT, GPT2). Shaded bands show the (25-75) inter-quartile range over all $720$ orderings.}
  \vspace{-0.3cm}
  \label{fig:all-distilbert-distilgpt}
\end{figure*}

 Patching order has a substantial impact on interpolation performance (Figure~\ref{fig:all-distilbert-distilgpt}, Appendix Figure~\ref{fig:aupic-distilbert-distilgpt}). The best ordering yields near-linear interpolation 
between the student and teacher in pseudo-perplexity (DistilBERT) or perplexity (DistilGPT2), while the worst ordering produces only two interpolated models that improve on the student. The last-to-first 
ordering of \citet{kangaslahti2026boomerang} attains optimal AUPIC on DistilBERT and near-optimal AUPIC on DistilGPT2, whereas first-to-last is less consistent: near-optimal on DistilGPT2 but above the mean on
 DistilBERT. The dotted line envelope in Figure~\ref{fig:all-distilbert-distilgpt} bounds the best and worst interpolation curves over all $2^6=64$ partially-patched models, including non-nested ones not reachable 
by any single patching order; gaps between this envelope and the best ordering quantify the headroom available from relaxing the ordering constraint to general subset-based patching. The substantial gap 
between the optimum and the bulk of the AUPIC distribution, together with the inconsistency of fixed orderings across models, motivates a principled procedure for selecting patching orders, which we develop in
 Section~\ref{sec:klpatch}.

\vspace{-0.2cm}
 \subsection{Good Patching Orders Cluster Around the Optimum}\label{sec:experiment:similarities}
 \vspace{-0.2cm}
 Section~\ref{sec:experiment:distilbert} characterized the spread of AUPIC across all orderings. Here we ask a finer question: are good orderings \emph{near} the optimum in permutation space, or are they 
scattered? A clean structural answer would say that small perturbations of the optimum remain near-optimal, suggesting a smooth landscape amenable to local search.

 \vspace{-0.2cm}
\paragraph{Setup.}
 We use the $720$ trajectories from Section~\ref{sec:experiment:distilbert} for both DistilBERT and DistilGPT2 and measure each ordering's Spearman footrule distance to the empirical optimum. For permutations 
$\pi,\pi'\in\mathfrak{S}_N$, the footrule distance is computed as $d(\pi,\pi') = \sum_{i=1}^{N}\bigl|\pi(i)-\pi'(i)\bigr|,$ i.e., the $\ell_1$ distance between the position vectors. We then bin orderings by their distance to the optimum and report the mean AUPIC per bin.

\begin{figure*}[!t]
  \centering
  \includegraphics[width=0.9\linewidth]{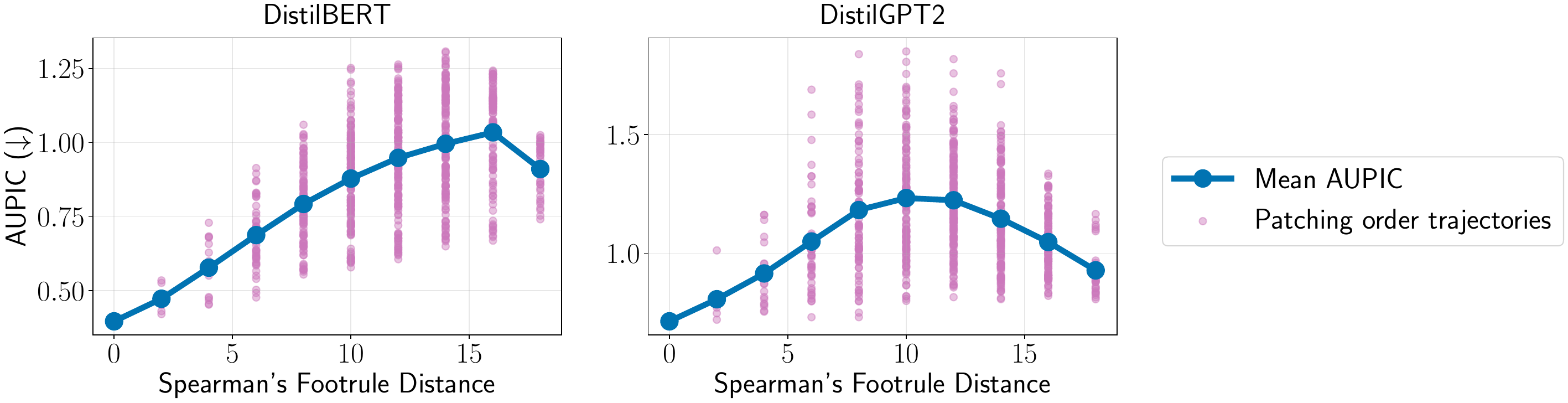}
   \caption{\textbf{AUPIC vs.~Spearman footrule distance from the minimum-AUPIC permutation.} On both DistilBERT and DistilGPT2, mean AUPIC rises with distance from the optimum and then flattens, indicating a cluster of near-optimal orderings around the empirical minimum.}
  \vspace{-0.2cm}
  \label{fig:permutation-distance-distilbert-distilgpt}
\end{figure*}

 \vspace{-0.2cm}
\paragraph{Results.}
Mean AUPIC increases with footrule distance from the optimum and then plateaus at the largest distances (Figure~\ref{fig:permutation-distance-distilbert-distilgpt}). Concretely, orderings within small 
footrule distance to the optimum form a cluster of near-optimal AUPIC, while at larger distances the relationship becomes noisy and AUPIC saturates. In the language of Section~\ref{sec:patching-orders}, 
near-optimal patching orders correspond to short paths in $\mathcal{G}$ that share a substantial number of edges with the shortest path; this local structure is what makes greedy approaches such as KLPatch 
(Section~\ref{sec:klpatch}) viable: small, locally-informed deviations from the shortest-path skeleton tend not to leave the near-optimal basin.

\vspace{-0.2cm}
 \subsection{Effect of Layer Patching in Large Language Models}\label{sec:experiment:patching_order}
 \vspace{-0.2cm}
 We now ask whether the small-model findings of Section~\ref{sec:experiment:distilbert} carry over to larger language models. The full enumeration used there ($N!=720$) is infeasible at this scale ($N! > 10^{12}$), so we 
instead sample $200$ random patching orders per model and study the resulting AUIC distribution.

 \paragraph{Setup.}
 We use the distilled student models from \citet{kangaslahti2026boomerang} for Qwen3-4B Base, Qwen3-8B Base, and Pythia-6.9B. For each of $200$ 
randomly sampled patching orders, we iteratively patch one student layer at a time and evaluate downstream accuracy at every interpolation point, averaging across the task suite of 
\citet{kangaslahti2026boomerang} to obtain an AUIC value per ordering (see Appendix \ref{apx:experimental_details} for details). We additionally include first-to-last and last-to-first as fixed-order baselines. Wikitext AUPIC results for the same models are reported 
in Appendix~\ref{apx:additional-permutation}.\looseness-1

\begin{figure*}[!t]
  \centering
  \includegraphics[width=\linewidth]{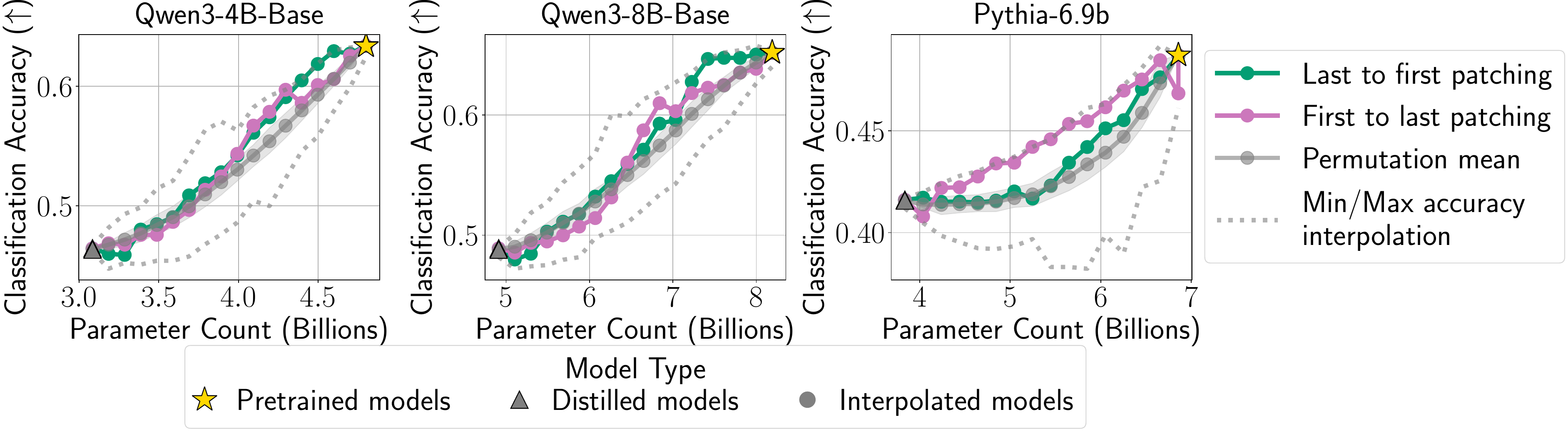}
  \vspace{-0.5cm}
     \caption{\textbf{Downstream interpolation curves across patching orders.} For each of Qwen3-4B, Qwen3-8B, and Pythia-6.9B, the mean downstream accuracy is plotted against patched model size, 
across the $200$ sampled orderings together with the first-to-last and last-to-first baselines. Shaded bands show the (25-75) inter-quartile range for the $200$ orderings. First-to-last remains competitive but does not always match the best-sampled ordering.}
  \vspace{-0.4cm}
  \label{fig:permutation-comparison-llms}
\end{figure*}

 \paragraph{Results.}
 Across all three models, patching order has a substantial impact on downstream interpolation quality (Figure~\ref{fig:permutation-comparison-llms}, Appendix Figure~\ref{fig:permutation-comparison-llms-distribution}). The 
small-model regularity, however, does not transfer cleanly: while last-to-first attains the lowest \emph{Wikitext} AUPIC (Appendix~\ref{apx:additional-permutation}), it leaves a substantial gap in downstream 
AUIC relative to the best-sampled ordering. This is consistent with prior observations that pretraining perplexity is an imperfect proxy for downstream performance~\citep{liu2023same}. Moreover, the IQR across
 the $200$ sampled orderings is narrow relative to this gap, so the lift available from finding a near-optimal order—rather than averaging over random ones—is large in practice; Section~\ref{sec:klpatch} 
develops a procedure for doing so.\looseness-1

\vspace{-0.2cm}
 \section{KL-Guided Patching Orders}\label{sec:klpatch}

Section~\ref{sec:patching-order-permutations} showed that patching order has a substantial impact on interpolation quality, and that fixed orderings (first-to-last, last-to-first) leave a large headroom 
relative to the empirical optimum. We now turn this characterization into an algorithm.

\subsection{The KLPatch Algorithm}\label{sec:klpatch-algo}

KLPatch (Algorithm~\ref{alg:greedy-patching-order}) is an iterative greedy procedure that, at each step, patches the student layer whose patched version minimizes KL
 divergence to the teacher on a calibration set---i.e., it greedily picks the cheapest outgoing edge in the interpolation graph $\mathcal{G}$ of Definition~\ref{def:interpolation-graph}. It repeats until all $N$ layers are patched. Since iteration $k$ evaluates $N-k$ candidates, KLPatch reduces the search from $O(N!)$ exhaustive enumeration to $O(N^2)$ KL evaluations. Reading 
Problem~\ref{prob:optimal-permutation} through Proposition~\ref{prop:paths}, KLPatch is a greedy approximation to the shortest-path problem on $\mathcal{G}$. 
In empirical evaluations (Section~\ref{sec:experiment:klpatch}), we show that it yields permutations close to the optima of both Problem~\ref{prob:optimal-interpolation} and Problem~\ref{prob:optimal-permutation}.

\begin{algorithm}[h]
 \caption{KLPatch: Iterative Greedy Construction of a Patching Order for Model Interpolation}
\label{alg:greedy-patching-order}
\small
\begin{algorithmic}[1]
\Require Teacher model $\mT$ with layer blocks $\mathcal{B}=(\mathbf{b}^{(1)},\ldots,\mathbf{b}^{(N)})$; trained student $\mS$; calibration set $\mathcal{D}_{\mathrm{cal}}$.%
\State $U \gets \{1,\ldots,N\};\quad \mM \gets \mS;\quad \pi \gets [\ ]$
\While{$U \neq \emptyset$} \Comment{Iteratively select the next layer to patch.}
    \For{each $i \in U$} \Comment{Patch the student model and measure the KLDivergence.}
        \State $\tilde{\mM}^{(i)} \gets \textsc{Patch}(\mM, i, \mathbf{b}^{(i)})$ \Comment{replace $i$th layer in $\hat{\mS}$ with $\mathbf{b}^{(i)}$}
        \State $\ell_i \gets \mathbb{E}_{x\sim\mathcal{D}_{\mathrm{cal}}}\,\mathrm{KL}(p_\mT(\cdot\mid x)\|p_{\tilde{\mM}^{(i)}}(\cdot\mid x))$
    \EndFor
    \State $i^\star \gets \arg\min_i \ell_i$ \Comment{Select the student layer with lowest KL to be patched.}
    \State $\mM \gets \textsc{Patch}(\mM, i^\star, \mathbf{b}^{(i^\star)})$ 
    \State $\pi \gets \pi \mathbin{\|} [i^\star]$\Comment{Add the student layer index to the patching order sequence $\pi$.}
    \State $U \gets U \setminus \{i^\star\}$
\EndWhile

\State \Return $\pi$
\end{algorithmic}
\end{algorithm}

\subsection{KLPatch Finds Near-Optimal Patching Orders}\label{sec:experiment:klpatch}

 \paragraph{Setup.} We compare KLPatch's patching order against first-to-last / last-to-first baselines as well as the best interpolation and permutation obtained from $200$ random orderings on the same models, students, and downstream task suite as in Section~\ref{sec:experiment:patching_order} (Qwen3-4B, 
Qwen3-8B, Pythia-6.9B), and against the distribution of $200$ random orderings from that section. For KLPatch's calibration set $\mathcal{D}_{\mathrm{cal}}$, we use $64$ samples drawn from the 
Pile~\citep{gao2020pile800gbdatasetdiverse}; the same calibration set is used across all student layers and all iterations of Algorithm~\ref{alg:greedy-patching-order}. We additionally evaluate KLPatch on 
Llama-3.2-3B in Appendix~\ref{apx:llama}.

\begin{figure*}[!t]
  \centering
  \includegraphics[width=\linewidth]{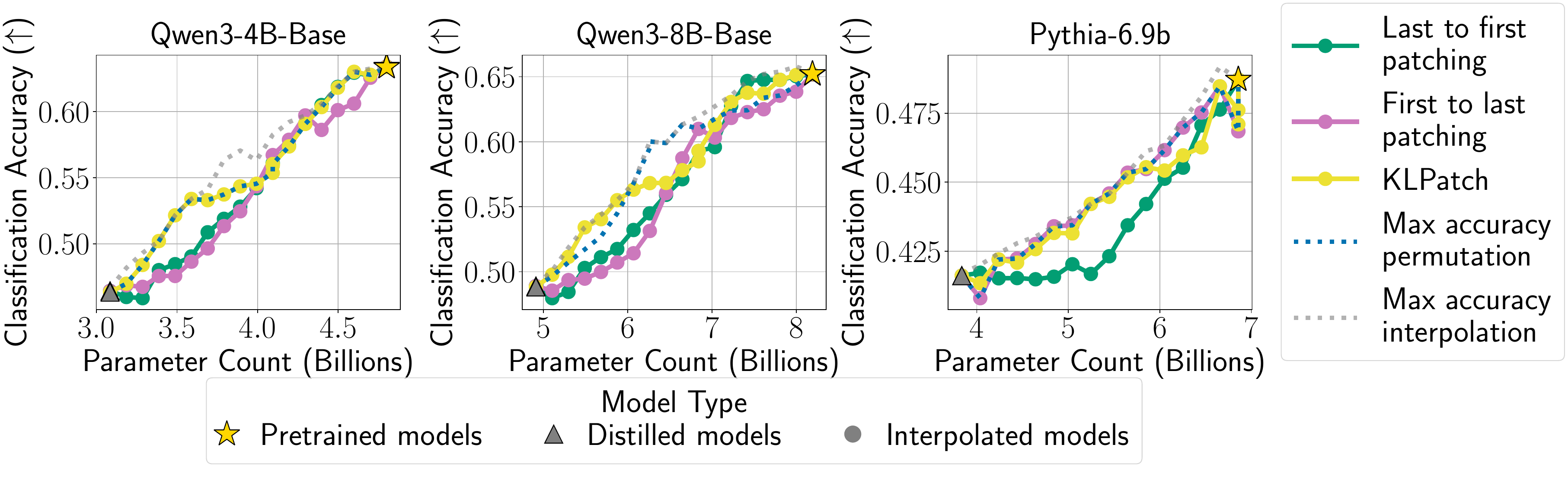}
  \vspace{-0.6cm}
   \caption{\textbf{KLPatch interpolation curves on Qwen3-4B, Qwen3-8B, and Pythia-6.9B.} For each model, mean downstream accuracy is plotted against the patched student size, comparing KLPatch 
(Algorithm~\ref{alg:greedy-patching-order}) against first-to-last and last-to-first baselines. Across all three models, KLPatch tracks or outperforms last-to-first on both classification and generation tasks.}
  \vspace{-0.4cm}
  \label{fig:klpatch-main}
\end{figure*}

 \paragraph{Results.}
 KLPatch yields strong model interpolations across various families and tasks. On Qwen3-4B, Qwen3-8B, and Pythia-6.9B, KLPatch outperforms or matches both first-to-last and last-to-first on classification (Figure~\ref{fig:klpatch-main}), generation (Appendix~\ref{apx:klpatch-generation}), and 
Wikitext perplexity (Appendix~\ref{apx:klpatch-perplexity}). 
It also beats all of the $200$ random orderings for Qwen3-4B-Base and Pythia-6.9b and all but one of the random orderings for Qwen3-8B-Base (Table~\ref{tab:klpatch_baselines_auic}), yielding near-optimal interpolation trajectories on these three models. 
In Appendix~\ref{apx:llama}, we also evaluate KLPatch on Llama models, and show that while it does not work out-of-the-box, slightly modifying the algorithm by always patching layer 1 first recovers its strong performance.
Overall, KLPatch provides an efficient approach for reliably creating high-performing model interpolations.

\section{Conclusion}
\vspace{-0.2cm}
 We presented a systematic study of layer patching in boomerang distillation. An exhaustive sweep over all interpolations on DistilBERT and DistilGPT2 shows that last-to-first is globally optimal or near-optimal, and that good orderings cluster together in permutation space. On Qwen3-4B, Qwen3-8B, and Pythia-6.9B, last-to-first remains 
competitive, but the gap to the empirical optimum is significant. To close this gap, we cast finding the optimal patching order
 as a shortest-path problem on a KL-weighted lattice (Proposition~\ref{prop:paths}), and propose KLPatch, a greedy $O(N^2)$ algorithm that commits to the cheapest outgoing edge at each step; KLPatch produces the best or near-best patching orders and consistently 
outperforms first-to-last and last-to-first on Qwen and Pythia.
 
Yet, several questions remain open. The magnitude of KLPatch's gains varies across model families, and characterizing the conditions under which patching-order optimization helps most is a natural next step. Other
 directions include sub-greedy alternatives that exploit Proposition~\ref{prop:paths} more globally, amortized procedures that avoid recomputing teacher-conditioned KL at every step, and reinforcement-learning
 approaches in the spirit of \citet{he2018amc}, which adaptively select interpolation trajectories rather than committing greedily.

\section*{Acknowledgments}
David Alvarez-Melis, Sara Kangaslahti, Jonathan Geuter, and Nihal V. Nayak acknowledge support from the National Science Foundation Graduate Research Fellowship (Grant No. DGE 2140743), the Kempner Institute, FAS Dean's Competitive Fund for Promising Scholarship, Aramont Fellowship Fund, and the NSF AI-SDM Institute (Grant No. IIS-2229881). 
Francesco Locatello's contribution to this research was funded in part by the Austrian Science Fund (FWF) 10.55776/COE12.

\bibliographystyle{plainnat}
\bibliography{bibliography}

\clearpage
{
  \hypersetup{linkcolor=black}
  \begingroup
  \renewcommand{\baselinestretch}{0.879}\normalsize
  \tableofcontents
  \endgroup
}
\clearpage
\appendix
\section{Limitations}\label{apx:limitations}
In this work, we use student models distilled from~\citet{kangaslahti2026boomerang}.
Although this saves a significant amount of compute, we inherit the limitations of the distilled models, which could potentially influence the patching order.
For example, we observe that KLPatch, out of the box, does not work well for Llama, as it significantly underperforms sequential patching order baselines.
We suspect this may be due to differences in the initialization and training of the distilled student model relative to the other distilled models.
See Appendix~\ref{apx:llama} for more details.

\section{Reproducibility Statement}\label{apx:reproducibility_statement}
We conduct all experiments using PyTorch~\citep{paszke:neurips19} and the Hugging Face Transformers library~\citep{wolf:arxiv19}.
We also use the publicly available codebase and distilled student models from \cite{kangaslahti2026boomerang} and use pretrained models from the Hugging Face Hub.

\section{Broader Impacts}\label{apx:broader_impact}
We use the distilled student models from~\citet{kangaslahti2026boomerang} and may therefore inherit biases or limitations present in these models.
Before deployment, we recommend carefully evaluating the models on the intended target tasks and conducting appropriate post-training and safety evaluations.

\section{Statement on Use of Large Language Models}\label{apx:llm_usage}
We utilized generative AI tools for code completion, debugging, and minor grammatical corrections in the manuscript. 
The authors carried out all the substantive research contributions, analyses, and interpretations. 

\section{Proofs}\label{apx:proofs}

\setcounter{proposition}{0}
\begin{proposition}[Shortest KL Paths are optimal]
Assume that for any permutation $\pi$, consecutive model interpolation sizes are equidistant; that is, there is a constant $c\in\mathbb{N}$ such that
\begin{equation*}
    |\mM^\pi_k|-|\mM^\pi_{k-1}|=c
\end{equation*}
for any $\pi\in\mathfrak S_N$ and $k=1,...,N$, where $|M|$ denotes the number of parameters of $M$.
Then the shortest paths in \(\mathcal G\) are exactly the optimal permutations with respect to \(\mathrm{AUPIC}^{\log}_\mT(\pi)\) (to be understood as a set-equality if there are multiple minimizers):
\[
\arg\min_{\pi\in\mathfrak S_N} E(\pi)
=
\arg\min_{\pi\in\mathfrak S_N}
\mathrm{AUPIC}^{\log}_{\mT}(\pi).
\]
\end{proposition}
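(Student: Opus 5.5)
The plan is to show that $\mathrm{AUPIC}^{\log}_\mT(\pi)$ is an increasing affine function of $E(\pi)$, with constants independent of $\pi$; the argmin sets then coincide. Everything hinges on writing teacher-referenced log-perplexity as a cross-entropy against $p_\mT$ and splitting off the teacher's entropy.

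\textbf{Step 1 (log-perplexity as cross-entropy).} For a model $\mM$ with next-token distribution $p_\mM$, take the teacher-referenced log-perplexity on contexts $x\sim\mathcal{D}_{\mathrm{cal}}$, with continuations drawn from $p_\mT(\cdot\mid x)$:
\[
\log\mathrm{PPL}_\mT(\mM)=\mathbb{E}_{x}\,\mathbb{E}_{y\sim p_\mT(\cdot\mid x)}\bigl[-\log p_\mM(y\mid x)\bigr].
\]
I will fix this reading explicitly, since the paper only describes it in words. The expectation equals
\[
\mathbb{E}_x\bigl[H(p_\mT(\cdot\mid x))\bigr]+\mathbb{E}_x\,\mathrm{KL}\bigl(p_\mT(\cdot\mid x)\,\|\,p_\mM(\cdot\mid x)\bigr)=H_\mT+\mathrm{KL}_\mT(\mM).
\]
Here $H_\mT$ depends only on the teacher and the calibration contexts, not on $\mM$.

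\textbf{Step 2 (trapezoid sum).} Write $a_k=\log\mathrm{PPL}_\mT(\mM_{A^\pi_k})$. Equidistance gives
\[
\mathrm{AUPIC}^{\log}_\mT(\pi)=\frac{c}{2}\sum_{k=1}^N(a_k+a_{k-1})=c\Bigl(\sum_{k=0}^N a_k-\tfrac12(a_0+a_N)\Bigr).
\]
Since $A^\pi_0=\emptyset$ and $A^\pi_N=\{1,\dots,N\}$ for every $\pi$, the endpoint terms $a_0$ (student) and $a_N$ (teacher) do not depend on $\pi$.

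\textbf{Step 3 (combine).} Substituting Step 1 gives $\sum_{k=0}^N a_k=(N+1)H_\mT+E(\pi)$, using the expression for $E(\pi)$ stated before the proposition. Hence
\[
\mathrm{AUPIC}^{\log}_\mT(\pi)=c\,E(\pi)+C,
\]
where $C$ is a constant independent of $\pi$. The $k=N$ term of $E$ is $\mathrm{KL}(p_\mT\|p_\mT)=0$, and the $k=0$ term is the same for all $\pi$, so indexing conventions do not affect the constant. For $c>0$ this is a strictly increasing affine map. It therefore preserves the full argmin set, which gives the claimed set equality including ties. The degenerate case $c=0$ cannot occur, since patching a block strictly adds parameters.

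\textbf{Main obstacle.} The only delicate point is bookkeeping. The edge-weight sum over the $N$ edges of $\gamma_\pi$ counts each KL at the head vertex, which covers $k=1,\dots,N$. The paper's displayed $E(\pi)$ instead sums over $k=0,\dots,N$. I will check that these differ only by $\pi$-independent terms (the student's KL at $k=0$), so the affine relation survives under either convention. I will also state that the calibration contexts used for KL and for $\mathrm{AUPIC}^{\log}_\mT$ are the same, so that the entropy term cancels exactly.
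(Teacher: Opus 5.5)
Your proposal is correct and follows the paper's proof essentially step for step. The paper likewise writes $\log\mathrm{PPL}_\mT(\mM)$ as the teacher's entropy plus the expected KL, sums over $k=0,\dots,N$, and applies the trapezoid identity with equidistant sizes to get $E(\pi)=\frac{1}{c}\mathrm{AUPIC}^{\log}_\mT(\pi)+\text{const}$, where the endpoint terms for $\mS$ and $\mT$ do not depend on $\pi$. Your extra bookkeeping is correct and tidies up points the paper passes over silently: the edge sum over $\gamma_\pi$ actually covers only $k=1,\dots,N$ and differs from the displayed $k=0,\dots,N$ sum by the $\pi$-independent student term, and $c>0$ is needed for the two argmin sets to coincide.
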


\begin{proof}
For any intermediate model \(\mM\), the cross-entropy decomposition gives
\[
H(p_\mT(\cdot\mid x),p_\mM(\cdot\mid x))
=
H(p_\mT(\cdot\mid x))
+
\mathrm{KL}
\left(
p_\mT(\cdot\mid x)
\,\middle\|\,
p_\mM(\cdot\mid x)
\right).
\]
Averaging over \(x\sim\mathcal D_{\mathrm{cal}}\) and rearranging, we obtain
\[
\mathbb E_{x\sim\mathcal D_{\mathrm{cal}}}
\mathrm{KL}
\left(
p_\mT(\cdot\mid x)
\,\middle\|\,
p_\mM(\cdot\mid x)
\right)
=
\log \mathrm{PPL}_{\mT}(\mM)
-
\log \mathrm{PPL}_{\mT}(\mT).
\]
Applying this identity to each intermediate model \(\mM_k^\pi\) and summing over
\(k=0,\ldots,N\),
\begin{align*}
E(\pi)
&=
\sum_{k=0}^{N} \mathbb E_{x\sim\mathcal D_{\mathrm{cal}}}
\mathrm{KL}
\left(
p_\mT(\cdot\mid x)
\,\middle\|\,
p_{\mM^\pi_k}(\cdot\mid x)
\right)\\
&=\sum_{k=0}^N \log \mathrm{PPL}_{\mT}(\mM^\pi_k)-
(N+1)\log \mathrm{PPL}_{\mT}(\mT)
\\
&=
\frac{1}{c}\mathrm{AUPIC}^{\log}_{\mT}(\pi) +\frac{1}{2}\left(\log\mathrm{PPL}_{\mT}(\mS)+\log\mathrm{PPL}_{\mT}(\mT)\right)
-
(N+1)\log \mathrm{PPL}_{\mT}(\mT),
\end{align*}
using the fact that $\mM^\pi_0=\mS$ and $\mM^\pi_N=\mT$.
The final two terms are independent of \(\pi\), so \(E(\pi)\) and
\(\mathrm{AUPIC}^{\log}_{T}(\pi)\) have the same minimizers.
\end{proof}

\begin{proposition}[Reference Distribution]\label{prop:reference-distribution}
If
\[
\left|
\log \mathrm{PPL}_{\mathrm{data}}(\mM_k^\pi)
-
\log \mathrm{PPL}_{\mT}(\mM_k^\pi)
\right|
\leq \eta
\]
for all evaluated \(k=0,\ldots,N\), then
\[
\left|
\mathrm{AUPIC}^{\log}_{\mT}(\pi)
-
\mathrm{AUPIC}^{\log}(\pi)
\right|
\leq
(N+1)\eta,
\]
and
\[
e^{-\eta}\mathrm{AUPIC}_{\mT}(\pi)
\leq
\mathrm{AUPIC}(\pi)
\leq
e^{\eta}\mathrm{AUPIC}_{\mT}(\pi).
\]
Equality holds in all three inequalities when \(p_\mT=p_{\mathrm{data}}\) on \(\mathcal D_{\mathrm{cal}}\).
\end{proposition}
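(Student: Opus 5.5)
The plan is to unfold the definitions of the AUPIC variants as trapezoidal sums and transfer the pointwise hypothesis term by term. Write $a_k \triangleq \log \mathrm{PPL}_{\mathrm{data}}(\mM^\pi_k)$, $b_k \triangleq \log \mathrm{PPL}_{\mT}(\mM^\pi_k)$, and $\Delta_k \triangleq |\mM_{A^\pi_k}|-|\mM_{A^\pi_{k-1}}|$. By the definitions in Section~\ref{sec:evaluation-metric},
\[
\mathrm{AUPIC}^{\log}(\pi)-\mathrm{AUPIC}^{\log}_{\mT}(\pi)=\sum_{k=1}^N \Delta_k\,\frac{(a_k-b_k)+(a_{k-1}-b_{k-1})}{2}.
\]
The triangle inequality together with $|a_k-b_k|\le\eta$ bounds this by $\eta\sum_k \Delta_k$. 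To reach the stated constant $(N+1)\eta$, I would work in the same normalization as in the proof of Proposition~\ref{prop:paths}, with unit spacing $\Delta_k=c=1$ (or sizes normalized as in the experiments). Each interior index then contributes weight $1$ and each endpoint weight $1/2$, for a total weight of $N$. Alternatively, I would read $\mathrm{AUPIC}^{\log}$ as the plain sum $\sum_{k=0}^N$ used in $E(\pi)$, which gives weight $1$ to each of the $N+1$ terms. Either convention yields at most $(N+1)\eta$, so I would state the convention explicitly and note that the bound is slightly loose under the trapezoidal one.

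For the multiplicative bound I would exponentiate the hypothesis. From $|a_k-b_k|\le\eta$ we get
\[
e^{-\eta}\,\mathrm{PPL}_{\mT}(\mM^\pi_k)\le \mathrm{PPL}_{\mathrm{data}}(\mM^\pi_k)\le e^{\eta}\,\mathrm{PPL}_{\mT}(\mM^\pi_k).
\]
Since every coefficient $\Delta_k/2$ in the raw AUPIC is nonnegative (patching only increases size), both the averages $\tfrac12(\cdot+\cdot)$ and the weighted sums preserve these inequalities. This gives $e^{-\eta}\mathrm{AUPIC}_{\mT}\le\mathrm{AUPIC}\le e^{\eta}\mathrm{AUPIC}_{\mT}$ directly.

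For the equality case, if $p_\mT=p_{\mathrm{data}}$ on $\mathcal D_{\mathrm{cal}}$, then the two perplexities are computed as expectations under the same reference distribution, so $a_k=b_k$ for every $k$. Hence both sides coincide, and one can take $\eta=0$, where all three inequalities hold with equality. The main subtlety, rather than obstacle, is pinning down the weighting convention so that the constant $(N+1)$ comes out as stated. I would also check that the perplexities are defined as exponentiated average cross-entropy against the respective reference distribution, so that the identity used in Proposition~\ref{prop:paths} is consistent with this reading.
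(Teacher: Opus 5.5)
Your proposal is correct and matches the paper's proof essentially step for step. The paper takes your second convention, $\mathrm{AUPIC}^{\log}=\sum_{k=0}^N \log\mathrm{PPL}(\mM^\pi_k)$, applies the triangle inequality term by term to get $(N+1)\eta$, exponentiates $|a_k-b_k|\le\eta$ and sums for the multiplicative bound, and sets $\eta=0$ for the equality case. Your attention to the trapezoidal weights (the definition in Section~\ref{sec:evaluation-metric}, which the paper's proof glosses over) is a sound refinement: the multiplicative bound holds for any nonnegative size increments, whereas the additive constant $(N+1)\eta$ needs a normalization such as unit spacing.
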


\begin{proof}
For each \(k\), define
\[
a_k
:=
\log \mathrm{PPL}_{\mathrm{data}}(\mM_k^\pi),
\qquad
b_k
:=
\log \mathrm{PPL}_{\mT}(\mM_k^\pi).
\]
By assumption, \(|a_k-b_k|\leq \eta\) for all \(k=0,\ldots,N\). Therefore,
by the triangle inequality,
\[
\left|
\mathrm{AUPIC}^{\log}(\pi)
-
\mathrm{AUPIC}^{\log}_{\mT}(\pi)
\right|
=
\left|
\sum_{k=0}^{N} a_k
-
\sum_{k=0}^{N} b_k
\right|
\leq
\sum_{k=0}^{N}|a_k-b_k|
\leq
(N+1)\eta.
\]

For the raw-scale statement, the same assumption implies
\[
b_k-\eta \leq a_k \leq b_k+\eta.
\]
Exponentiating gives
\[
e^{-\eta}e^{b_k}
\leq
e^{a_k}
\leq
e^{\eta}e^{b_k}.
\]
Since
\[
e^{a_k}=\mathrm{PPL}_{\mathrm{data}}(\mM_k^\pi),
\qquad
e^{b_k}=\mathrm{PPL}_{\mT}(\mM_k^\pi),
\]
summing over \(k=0,\ldots,N\) yields
\[
e^{-\eta}\sum_{k=0}^{N}\mathrm{PPL}_{\mT}(\mM_k^\pi)
\leq
\sum_{k=0}^{N}\mathrm{PPL}_{\mathrm{data}}(\mM_k^\pi)
\leq
e^{\eta}\sum_{k=0}^{N}\mathrm{PPL}_{\mT}(\mM_k^\pi).
\]
Equivalently,
\[
e^{-\eta}\mathrm{AUPIC}_{\mT}(\pi)
\leq
\mathrm{AUPIC}(\pi)
\leq
e^{\eta}\mathrm{AUPIC}_{\mT}(\pi).
\]

If \(p_\mT=p_{\mathrm{data}}\) on
\(\mathcal D_{\mathrm{cal}}\), then
\[
\log \mathrm{PPL}_{\mathrm{data}}(\mM_k^\pi)
=
\log \mathrm{PPL}_{\mT}(\mM_k^\pi)
\]
for every \(k\), so the above bounds hold with \(\eta=0\), giving equality.
\end{proof}

\begin{proposition}[Log Scale versus raw Scale]\label{prop:log-vs-raw}
We have
\[
(N+1)
\exp\left(
\frac{1}{N+1}\mathrm{AUPIC}^{\log}(\pi)
\right)
\leq
\mathrm{AUPIC}(\pi),
\]
with equality if and only if
\(\log \mathrm{PPL}_{\mathrm{data}}(\mM_k^\pi)\) is constant in \(k\).
The same statement is true for \(\mathrm{AUPIC}^{\log}_{\mT}\) and \(\mathrm{AUPIC}_{\mT}\) in place of
\(\mathrm{AUPIC}^{\log}\) and \(\mathrm{AUPIC}\).
\end{proposition}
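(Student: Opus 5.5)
This is Jensen's inequality, or equivalently AM--GM, applied to the log-perplexities along the trajectory. The proof of Proposition~\ref{prop:reference-distribution} treats $\mathrm{AUPIC}(\pi)$ and $\mathrm{AUPIC}^{\log}(\pi)$ as plain sums over the $N+1$ interpolation points:
\[
\mathrm{AUPIC}(\pi)=\sum_{k=0}^{N}\mathrm{PPL}_{\mathrm{data}}(\mM_k^\pi),\qquad \mathrm{AUPIC}^{\log}(\pi)=\sum_{k=0}^{N}\log\mathrm{PPL}_{\mathrm{data}}(\mM_k^\pi).
\]
I adopt the same convention here; the size weights and trapezoidal endpoint halves are dropped.

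Set $a_k:=\log \mathrm{PPL}_{\mathrm{data}}(\mM_k^\pi)$ for $k=0,\ldots,N$. With this convention the claim reads
\[
(N+1)\exp\Bigl(\tfrac{1}{N+1}\textstyle\sum_k a_k\Bigr)\le \sum_k e^{a_k}.
\]
Dividing by $N+1$ gives $\exp(\text{mean of } a_k)\le \text{mean of } e^{a_k}$. This is exactly Jensen's inequality for the convex function $\exp$ under the uniform probability measure on $\{0,\ldots,N\}$.

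For the equality case, note that $\exp$ is strictly convex. Jensen's inequality for a strictly convex function is tight if and only if the random variable is almost surely constant. Since every point has positive mass $1/(N+1)$, this means all $a_k$ are equal, i.e.\ $\log\mathrm{PPL}_{\mathrm{data}}(\mM_k^\pi)$ is constant in $k$. The converse direction is immediate by substitution.

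The teacher-relative version follows by the identical argument with $a_k:=\log\mathrm{PPL}_\mT(\mM_k^\pi)$, since nothing above used the reference distribution.

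The only real obstacle is definitional. If one instead used the size-weighted trapezoidal definition from Section~\ref{sec:evaluation-metric}, the weights would be $c/2,c,\ldots,c,c/2$, not uniform. Jensen would then give a weighted mean, and the factor $N+1$ in the statement would have to be replaced by the total weight $cN$. I would therefore state the convention explicitly at the outset, matching the proof of Proposition~\ref{prop:reference-distribution}. I would also remark that the weighted version goes through identically with normalized weights, yielding the same equality condition.
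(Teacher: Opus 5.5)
Your proposal is correct and follows the paper's proof essentially line for line: the paper also writes $\mathrm{AUPIC}^{\log}(\pi)=\sum_{k=0}^{N} z_k$ and $\mathrm{AUPIC}(\pi)=\sum_{k=0}^{N} e^{z_k}$, applies Jensen's inequality for $\exp$ with uniform weights $1/(N+1)$, obtains the equality case from strict convexity, and handles the teacher-relative version identically. Your remark is also accurate: under the size-weighted trapezoidal definition the factor $N+1$ becomes the total weight $cN$, and the paper's proof adopts the plain-sum convention without saying so.
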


\begin{proof}
Let
\[
z_k
:=
\log \mathrm{PPL}_{\mathrm{data}}(\mM_k^\pi),
\qquad
k=0,\ldots,N.
\]
Then
\[
\mathrm{AUPIC}^{\log}(\pi)
=
\sum_{k=0}^{N} z_k,
\qquad
\mathrm{AUPIC}(\pi)
=
\sum_{k=0}^{N} e^{z_k}.
\]
Since the exponential function is convex, Jensen's inequality gives
\[
\exp\left(
\frac{1}{N+1}
\sum_{k=0}^{N} z_k
\right)
\leq
\frac{1}{N+1}
\sum_{k=0}^{N}
e^{z_k}.
\]
Multiplying both sides by \(N+1\), we obtain
\[
(N+1)
\exp\left(
\frac{1}{N+1}
\mathrm{AUPIC}^{\log}(\pi)
\right)
\leq
\mathrm{AUPIC}(\pi).
\]
By strict convexity of the exponential function, equality holds if and only if
\[
z_0=z_1=\cdots=z_N,
\]
that is, if and only if
\(\log \mathrm{PPL}_{\mathrm{data}}(\mM_k^\pi)\) is constant in \(k\).

The same Jensen inequality and equality condition also hold with
\(\mathrm{AUPIC}^{\log}_{\mT}\) and \(\mathrm{AUPIC}_{\mT}\) in place of
\(\mathrm{AUPIC}^{\log}\) and \(\mathrm{AUPIC}\).
\end{proof}

\begin{remark}[Log vs raw AUPIC]
Log and raw AUPIC have the same monotone direction but generally distinct
argmins, and can disagree when one ordering has a single anomalously bad
intermediate model whose raw perplexity dominates the sum.
The map \(z\mapsto e^z\) is strictly increasing, so if one ordering has no larger
log-perplexity than another ordering at every interpolation point, then it also has
no larger raw perplexity at every interpolation point. In this pointwise sense, the
log and raw objectives have the same monotone direction.

However, the two summed objectives need not have the same minimizer. The log
objective sums the \(z_k\), while the raw objective sums \(e^{z_k}\), so the raw
objective penalizes isolated large values much more heavily. For example, consider
two candidate interpolation curves with the same endpoints and intermediate
log-perplexities
\[
(z_0,z_1,z_2,z_3)=(0,0,6,0),
\qquad
(z'_0,z'_1,z'_2,z'_3)=(0,3.1,3.1,0).
\]
Then
\[
\sum_{k=0}^{3} z_k = 6
<
6.2
=
\sum_{k=0}^{3} z'_k,
\]
so the first curve is preferred on the log scale. But
\[
\sum_{k=0}^{3} e^{z_k}
=
3+e^6
>
2+2e^{3.1}
=
\sum_{k=0}^{3} e^{z'_k},
\]
so the second curve is preferred on the raw scale. Thus log-scale and raw-scale
AUPIC can have distinct argmins.
\end{remark}

\begin{remark}[Equidistant Model Sizes Assumption]\label{rem:equidistant}
Proposition~\ref{prop:paths} hinges on the assumption that for any permutation, consecutive model sizes are equidistant. This assumption holds in settings where all teacher blocks have the same size, which can be achieved by initializing the student from every $k$th teacher layer. This is almost exactly what \cite{kangaslahti2026boomerang} do in practice; however, as is standard practice since models such as DistilBERT and DistilGPT, they also ensure the first and last layer are kept, which results in one additional layer over the simple "every other layer" approach.\footnote{Which additional layer is kept varies by model family; for Qwen and Pythia, they keep layers $1, 3, 5, ..., N-1, N$, while for Llama, they keep $1, 2, 4, 6, ..., N$; for every model used, the number of layers $N$ is even.}
\end{remark}

\section{Experimental Details}\label{apx:experimental_details}

\paragraph{Models.}
We use the openly available distilled models released by ~\citet{kangaslahti2026boomerang} for Qwen, Pythia, and Llama, and the DistilBERT and DistilGPT models released by ~\citet{sanh2019distilbert} on Hugging Face. 
All the interpolated models are created without any additional training. 

\paragraph{Evaluation Datasets.} 
We use the same evaluation datasets as ~\citet{kangaslahti2026boomerang} throughout the paper. 
We report the classification accuracy on these ten tasks: ARC-easy and ARC-challenge \citep{clark2018thinksolvedquestionanswering}, BoolQ \citep{clark2019boolqexploringsurprisingdifficulty}, HellaSwag~\citep{zellers2019hellaswagmachinereallyfinish}, OpenBookQA \citep{mihaylov2018suitarmorconductelectricity}, PIQA \citep{bisk2019piqareasoningphysicalcommonsense}, WinoGrande~\citep{sakaguchi2019winogrande}, RACE \citep{lai-etal-2017-race}, MMLU \citep{hendrycks2021measuringmassivemultitasklanguage}, and RTE \citep{wang-etal-2018-glue}. 
For generation tasks, we report the exact match on the following datasets: GSM8K \citep{cobbe2021trainingverifierssolvemath}, IFEval \citep{zhou2023instructionfollowingevaluationlargelanguage}, and MATH \citep{hendrycksmath2021}. 
We report perplexity and pseudo-perplexity on the Wikitext dataset~\citep{wikitext}. 
We use \verb|lm-| \verb|evaluation-harness| \citep{eval-harness} to evaluate all of the models used in our experiments. 

\paragraph{Hardware and Walltime.}
For the DistilBERT and DistilGPT interpolation experiments, we use a single NVIDIA A100 40GB GPU, whereas for the LLM interpolation experiments, we use a single NVIDIA H100 80GB GPU to create and evaluate the intermediate models. 
Evaluating a single intermediate model in a trajectory is completed in under an hour.

\section{Empirically Validating Theoretical Assumptions}\label{apx:assumption-validation}

In this section, we empirically validate the assumptions made in Section \ref{sec:model-patching} and Appendix \ref{apx:proofs}.

\subsection{KL Divergence Strongly Correlates with Perplexity}\label{apx:kl-vs-perplexity}

In Proposition \ref{prop:paths}, we optimize for the perplexity with respect to the teacher distribution rather than the data distribution. In this section, we show that optimizing w.r.t. the teacher distribution is reasonable by demonstrating a strong correlation between the KL divergence to the teacher model and downstream perplexity.

\paragraph{Setup.} We use the exhaustive set of DistilBERT and DistilGPT2 patching orders from Section \ref{sec:experiment:distilbert} to study the relationship between KL divergence and data perplexity.
We evaluate KL divergence on the Wikitext test set ~\citep{wikitext} and on a calibration set of $64$ samples from the Pile ~\citep{gao2020pile800gbdatasetdiverse}, and evaluate perplexity on the Wikitext test set.

\paragraph{Results.} When KL divergence and perplexity are evaluated on the same dataset (Wikitext), there is a strong correlation (Pearson r =$0.981$ for DistilBERT and Pearson r=$0.955$ for DistilGPT2) between KL divergence and perplexity (Figure \ref{fig:kl-aupic-correlation}). 
This indicates that distributions of interpolated models that are similar to the distribution of the teacher are also similar to the data distribution. 
Figure \ref{fig:kl-path-aupic-correlation} further shows that KL path length is strongly correlated to AUPIC (Pearson r=$0.958$ for DistilBERT and Pearson r=$0.903$ for DistilGPT2), so shorter interpolation paths in $\mathcal{G}$ tend to have better performance in terms of AUPIC. 
These results also hold when the KL divergence is computed on a different dataset than the perplexity.
Figure \ref{fig:kl-aupic-correlation-pile} demonstrates that KL divergence on the Pile calibration set correlates with Wikitext perplexity, and Figure \ref{fig:kl-path-aupic-correlation-pile} shows that KL path length computed on the Pile calibration set is correlated with AUPIC. 

\begin{figure*}[!htb]
  \centering
  \includegraphics[width=0.9\linewidth]{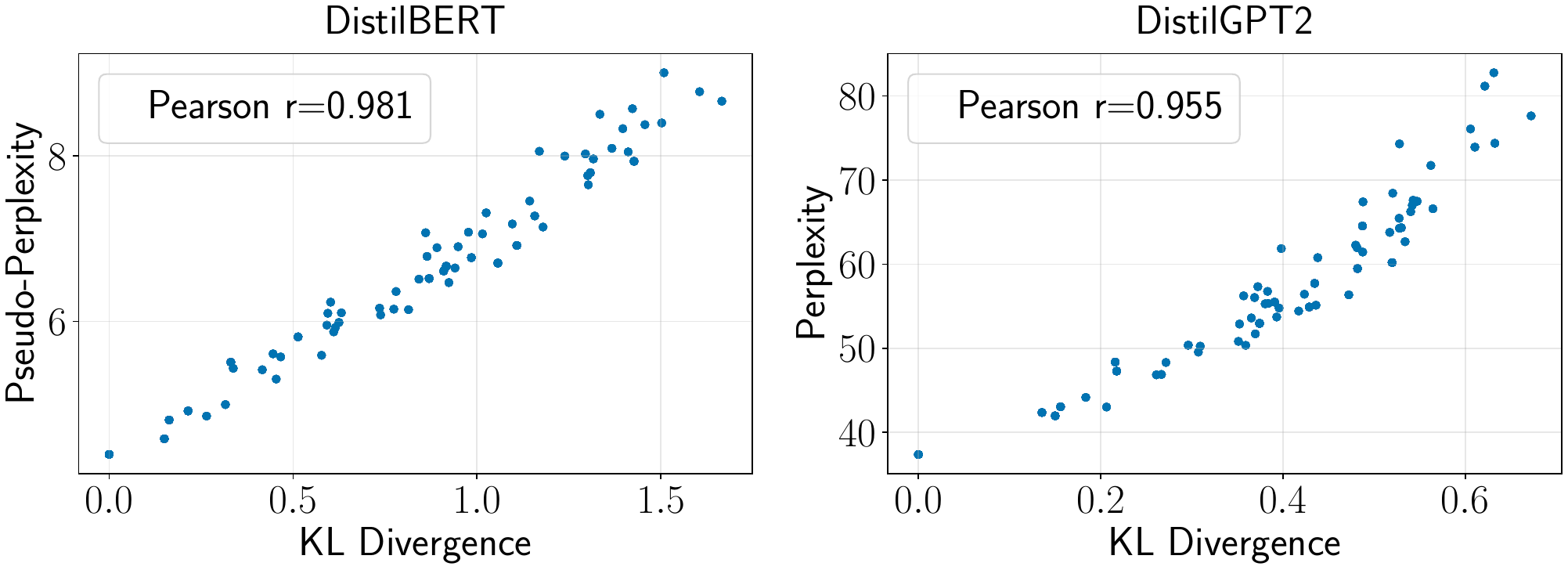}

  \caption{
  \textbf{KL divergence on Wikitext versus Wikitext perplexity.}
  When KL divergence is computed on Wikitext, there is a strong correlation between the KL divergence between interpolated models and the teacher and Wikitext perplexity for DistilBERT and DistilGPT2 models.
    }
    
  \label{fig:kl-aupic-correlation}
\end{figure*}
\clearpage

\begin{figure*}[!htb]
  \centering
  \includegraphics[width=0.9\linewidth]{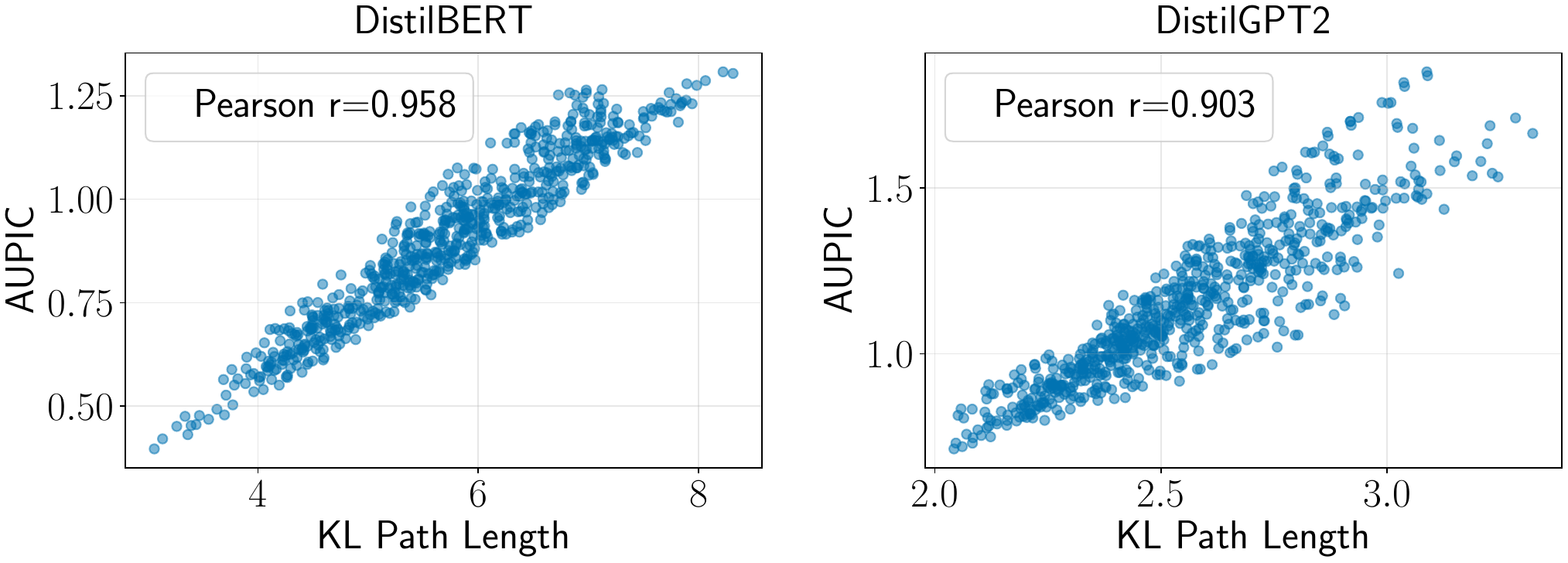}

  \caption{
  \textbf{KL Path Length on Wikitext versus Wikitext AUPIC.}
  For all possible patching orders, when KL divergence and perplexity are both computed on Wikitext, the KL path length in the interpolation graph correlates strongly with AUPIC. 
    }

  \label{fig:kl-path-aupic-correlation}
\end{figure*}

\begin{figure*}[!htb]
  \centering
  \includegraphics[width=0.9\linewidth]{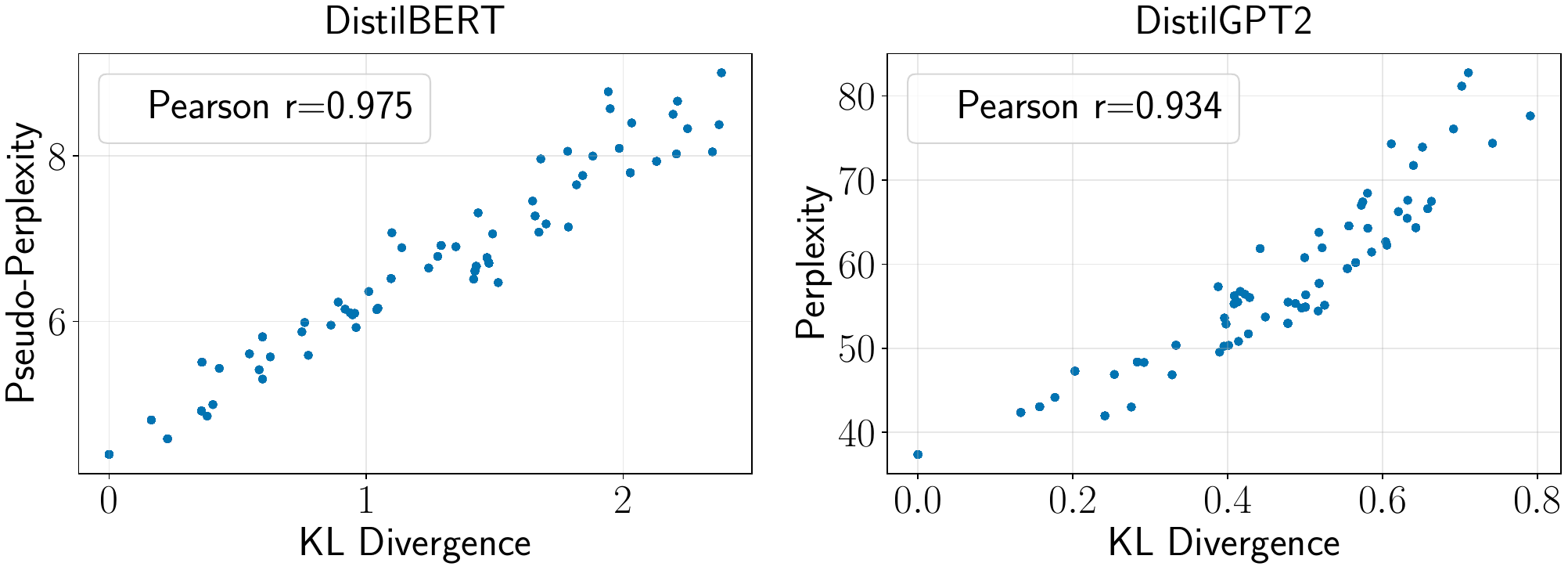}

  \caption{
  \textbf{KL divergence on the Pile versus Wikitext perplexity.}
  When KL divergence is computed on a $64$ example calibration set from the Pile, there is a strong correlation between the KL divergence between interpolated models and the teacher and Wikitext perplexity for DistilBERT and DistilGPT2 models.
    }

  \label{fig:kl-aupic-correlation-pile}
\end{figure*}

\begin{figure*}[!thb]
  \centering
  \includegraphics[width=0.9\linewidth]{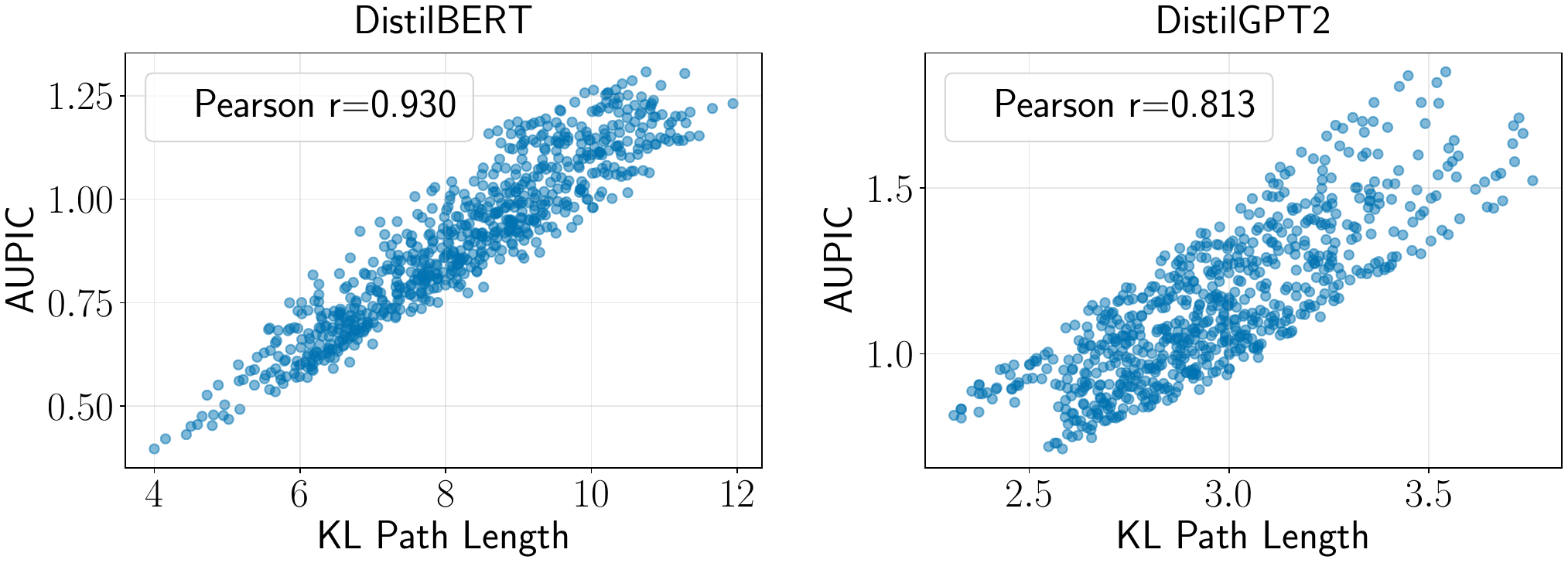}

  \caption{
  \textbf{KL Path Length on the Pile versus Wikitext AUPIC.}
  For all possible patching orders, when KL divergence is computed on a $64$ example calibration set of the Pile and perplexity is computed on Wikitext, the KL path length in the interpolation graph correlates strongly with AUPIC. 
    }

  \label{fig:kl-path-aupic-correlation-pile}
\end{figure*}

\subsection{Optimal Patching Order Closely Approximates Optimal Interpolation}\label{apx:patching-order-vs-interpolation}

In Section \ref{sec:interpolation-optimization}, we introduce the optimal interpolation problem (Problem \ref{prob:optimal-interpolation}) and the optimal permutation problem (Problem \ref{prob:optimal-permutation}) for subset selection and argue that greedy algorithms naturally solve both formulations. In this section, we empirically validate that the optimal solution to Problem \ref{prob:optimal-permutation} is close in performance to the optimal solution to Problem \ref{prob:optimal-interpolation}. Thus, restricting the search space to the set of permutations does not impose a significant cap on performance as compared to the optimal interpolation problem.

\paragraph{Setup.} We evaluate the perplexity of the exhaustive set of DistilBERT and DistilGPT2 patching orders from Section \ref{sec:experiment:distilbert} and evaluate the classification accuracy of the set of $200$ patching orders for Qwen3-4B-Base, Qwen3-8B-Base, and Pythia-6.9b from Section \ref{sec:experiment:patching_order}.

\paragraph{Results.} Figure \ref{fig:patching-vs-interpolation-distilbert-distilgpt} shows that for DistilBERT models, the optimal permutation (lower dotted blue line) is exactly equal to the optimal interpolation (lower dotted gray line) for each model size. As a result, the minimum AUPIC for the optimal interpolation and patching order are identical (Table \ref{tab:aupic_interpolations}). For DistilGPT2, the optimal permutation is slightly higher in perplexity than the optimal interpolation for small model sizes (Figure \ref{fig:patching-vs-interpolation-distilbert-distilgpt}), but is very close in AUPIC to the optimal interpolation (Table \ref{tab:aupic_interpolations}).

For larger models, while it is computationally infeasible to compute the exact optimal permutation and interpolation, Figure \ref{fig:patching-vs-interpolation-llms} demonstrates that on the subset of $200$ patching orders that we evaluated, the maximum AUIC permutation (higher blue dotted line) has a classification accuracy close to that of the maximum interpolation of corresponding size (higher gray dotted line). We observe analogous results for perplexity and AUPIC in Figure \ref{fig:patching-vs-interpolation-llms-wikitext}. Similarly, the maximum AUIC across permutations is close to the maximum AUIC interpolation (Table \ref{tab:auic_interpolations}). Overall, these results indicate that across models and tasks, the optimal permutation (Problem \ref{prob:optimal-permutation}) is close in performance to the optimal interpolation (Problem \ref{prob:optimal-interpolation}).

\begin{table}[!ht]
\small
\centering
\begin{tabular}{llc}
\toprule
\textbf{Model} & \textbf{Interpolation} & \textbf{AUPIC ($\downarrow$)} \\
\midrule
           & Mean AUPIC & 0.909 \\
DistilBERT & Minimum AUPIC Interpolation & 0.397 \\
           & Minimum AUPIC Permutation & 0.397 \\
\midrule
           & Mean AUPIC & 1.072 \\
DistilGPT2 & Minimum AUPIC Interpolation & 0.657 \\
           & Minimum AUPIC Permutation & 0.713\\
\bottomrule
\end{tabular}
\vspace{5pt}
\caption{\textbf{AUPIC results for best performing interpolations vs permutations.} The minimum AUPIC permutation has AUPIC equal to or close to that of the minimum AUPIC interpolation. }
\label{tab:aupic_interpolations}
\end{table}

\begin{figure*}[!ht]
  \centering
  \includegraphics[width=\linewidth]{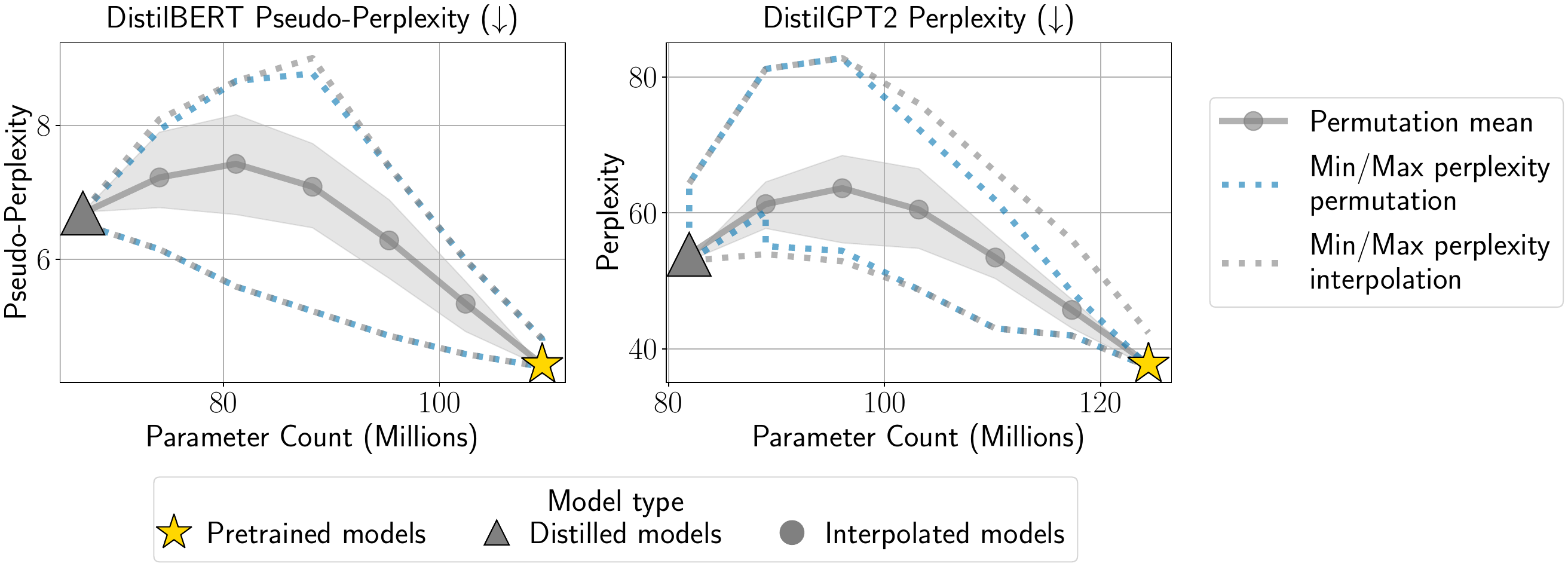}

  \caption{
  \textbf{Comparison of minimum AUPIC permutation vs interpolation for DistilBERT and DistilGPT2.} The minimum permutation produces the same perplexity as the minimum interpolation for DistilBERT and perplexity close to the minimum interpolation for DistilGPT2. Shaded bands show the (25-75) inter-quartile range over all 720 orderings.
  }

  \label{fig:patching-vs-interpolation-distilbert-distilgpt}
\end{figure*}
\clearpage

\begin{table}[!ht]
\centering
\small
\begin{tabular}{llc}
\toprule
\textbf{Model} & \textbf{Interpolation} & \textbf{AUIC ($\uparrow$)} \\
\midrule
            & Mean AUIC & 0.409 \\
Qwen3-4B-Base  & Maximum AUIC Interpolation & 0.571 \\
           & Maximum AUIC Permutation & 0.522 \\
\midrule
          & Mean AUIC & 0.447 \\
 Qwen3-8B-Base  & Maximum AUIC Interpolation & 0.634 \\
           & Maximum AUIC Permutation & 0.585 \\
\midrule
            & Mean AUIC & 0.209 \\
 Pythia-6.9b & Maximum AUIC Interpolation & 0.459 \\
           & Maximum AUIC Permutation & 0.409 \\
\bottomrule
\end{tabular}
\vspace{5pt}
\caption{\textbf{AUIC results for best performing interpolations vs permutations.} The maximum AUIC permutation has AUIC close to that of the maximum AUIC interpolation.}
\label{tab:auic_interpolations}
\end{table}

\begin{figure*}[!ht]
  \centering
  \includegraphics[width=\linewidth]{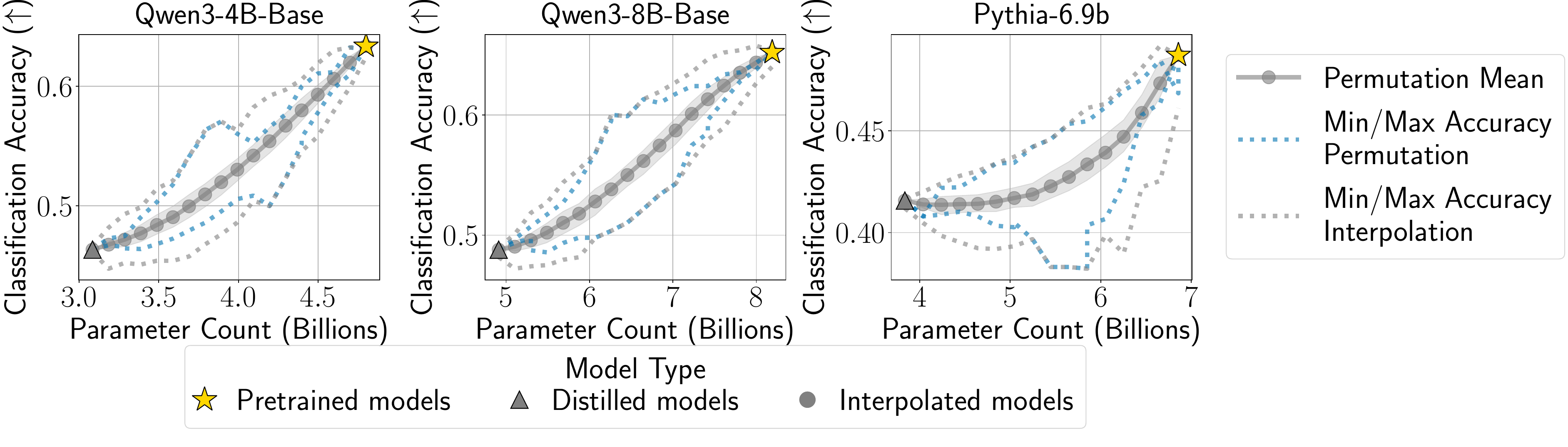}

  \caption{
  \textbf{Comparison of maximum AUIC permutation vs interpolation for Qwen3-4B-Base, Qwen3-8B-Base, and Pythia-6.9b over $\mathbf{200}$ random orderings.} Across models, the maximum AUIC permutation has classification accuracy very close to that of the maximum interpolation. Shaded bands show the (25-75) inter-quartile range over all $200$ orderings.
  }

  \label{fig:patching-vs-interpolation-llms}
\end{figure*}

\begin{figure*}[!ht]
  \centering
  \includegraphics[width=\linewidth]{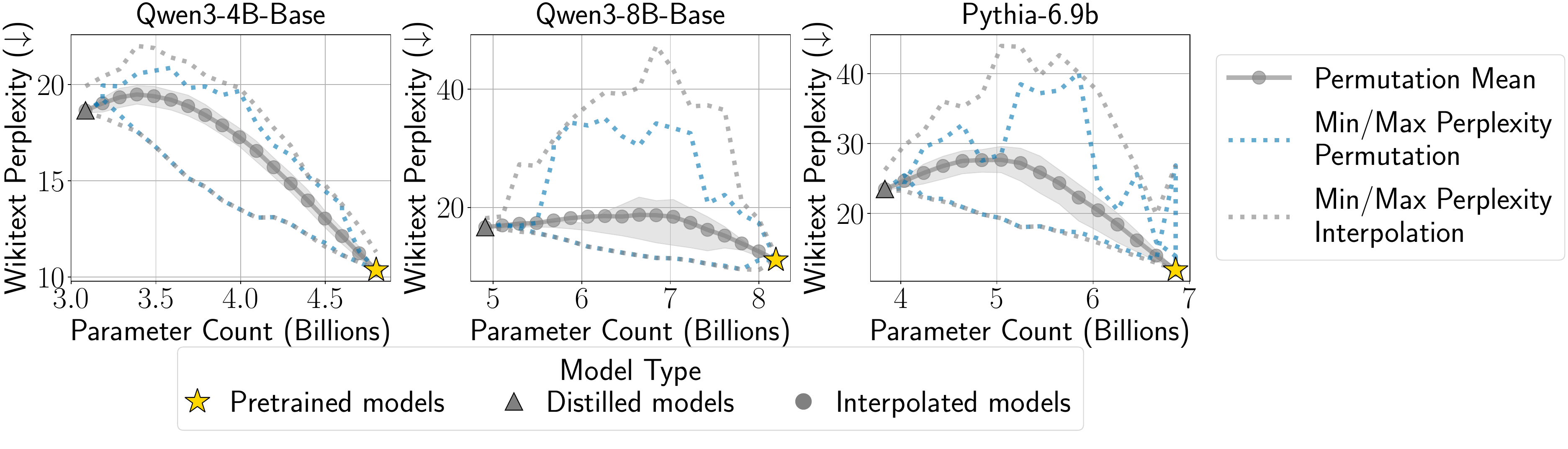}

  \caption{
  \textbf{Comparison of minimum AUPIC permutation vs interpolation for Qwen3-4B-Base, Qwen3-8B-Base, and Pythia-6.9b over $\mathbf{200}$ random orderings.} Across models, the minimum AUPIC permutation has perplexity very close to that of the minimum interpolation. Shaded bands show the (25-75) inter-quartile range over all $200$ orderings.
  }

  \label{fig:patching-vs-interpolation-llms-wikitext}
\end{figure*}

\subsection{KLPatch Path is Close to Global Shortest Path}\label{apx:klpatch-vs-global}

We prove in Proposition \ref{prop:paths} that shortest paths in \(\mathcal G\) are optimal permutations and introduce KLPatch in Section \ref{sec:klpatch} to find the approximate shortest path in a greedy manner. Here, we demonstrate that KLPatch produces paths that are close to the global shortest path in terms of AUPIC.

\paragraph{Setup.} We use the set of exhaustive patching orders for DistilBERT and DistilGPT2 models from Section \ref{sec:experiment:distilbert}. 
We compute the KL divergence for every path using a calibration set of $64$ examples from the Pile and report AUPIC for the global shortest path and the path found using our KLPatch algorithm. 
We provide the first to last and last to first patching orders as baselines.
For each patching order, we report the AUPIC and the \emph{AUPIC Percentile}, or the percent of total patching orders that have AUPIC greater than or equal to that of the patching order being tested: 
$$\text{AUPIC Percentile}(\pi) = \frac{1}{\vert \mathfrak{S}_N \vert}\sum_{\pi_i\in\mathfrak{S}_N} \mathbf{1}\{\text{AUPIC}(\pi_i) \geq \text{AUPIC}(\pi)\} $$

\paragraph{Results.} Table \ref{tab:aupic_paths_pile} shows that KLPatch produces paths with interpolation performance equal to or very close to that of the global shortest KL path. Furthermore, last to first patching provides a very strong baseline for DistilBERT and DistilGPT2 models, but KLPatch recovers an AUPIC equal to (DistilBERT) or very close to (DistilGPT2) that of the last-to-first patching. These results validate that KLPatch finds a path equal to or close in performance to that of the global shortest KL path.

\begin{table}[!ht]
\centering
\resizebox{\linewidth}{!}{%
\begin{tabular}{lllc c}
\toprule
\textbf{Model} & \textbf{Path} & \textbf{Patching Order $\pi$} & \textbf{AUPIC ($\downarrow$)} & \textbf{AUPIC Percentile ($\uparrow$)} \\
\midrule
           & Minimum AUPIC Path & $[5, 4, 3, 2, 1, 0]$ & 0.397 & 100\% \\
           & First to Last Patching & $[0, 1, 2, 3, 4, 5]$ & 0.968 & 38.75\% \\
DistilBERT & Last to First Patching & $[5, 4, 3, 2, 1, 0]$ & 0.397 & 100\% \\           
           & Global Shortest Path & $[5, 4, 3, 2, 1, 0]$ & 0.397 & 100\% \\
           & KLPatch Path & $[5, 4, 3, 2, 1, 0]$ & 0.397 & 100\% \\
\midrule
           & Minimum AUPIC Path & $[4, 5, 3, 2, 0, 1]$ & 0.713 & 100\% \\
           & First to Last Patching & $[0, 1, 2, 3, 4, 5]$ & 0.824 & 95.56\% \\
DistilGPT2 & Last to First Patching & $[5, 4, 3, 2, 1, 0]$ & 0.815 & 96.39\% \\   
           & Global Shortest KL Path & $[0, 1, 2, 3, 4, 5]$ & 0.824 & 95.56\% \\
           & KLPatch Path & $[2, 3, 0, 1, 4, 5]$ & 0.831 & 95.28\% \\
\bottomrule
\end{tabular}}
\vspace{5pt}
\caption{\textbf{AUPIC results for different patching paths.} The KLPatch path has an AUPIC close to that of the global shortest KL path.
\label{tab:aupic_paths_pile}}
\end{table}

In Figure~\ref{fig:patch-order-visualization}, we provide a visual representation of different patching orders, including the one found by KLPatch, to illustrate that KLPatch finds a permutation close to the optimal one.

\begin{figure}[!htb]
    \centering
    \hspace{.05\linewidth}\includegraphics[width=.9\linewidth]{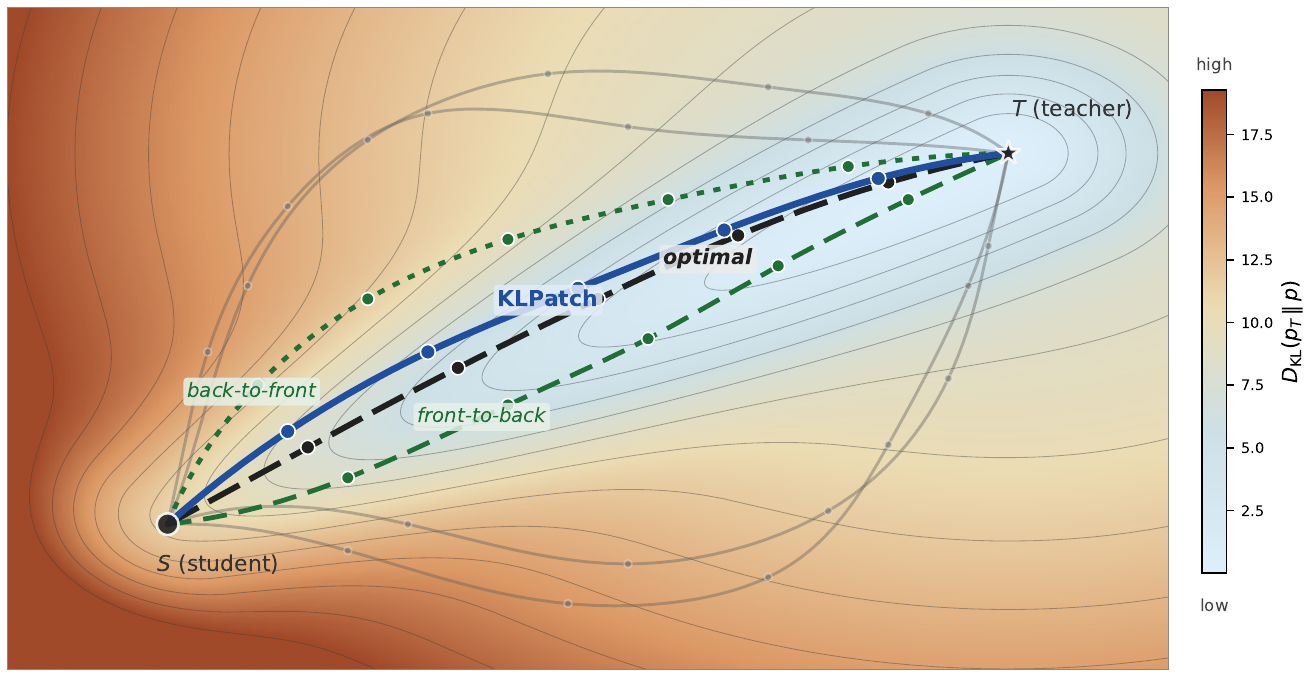}
    \caption{\textbf{KLPatch finds patching orders that are close to optimal:} Visual representation of the optimal patching order between the student $\mS$ and teacher $\mT$ along the KL divergence to the teacher model, compared to the KLPatch order, front-to-back and back-to-front, and random permutations.}
    \label{fig:patch-order-visualization}
\end{figure}

\section{Additional Permutation Results}\label{apx:additional-permutation}

We present extended results from Sections \ref{sec:experiment:distilbert} and \ref{sec:experiment:patching_order} on permutations of patching orders for model size interpolation.
Figures \ref{fig:aupic-distilbert-distilgpt}-\ref{fig:permutation-comparison-wikitext-llms} show that first to last and last to first patching orders often achieve the best interpolation trajectories compared to the random patching orders. 
These results show that simple sequential patching orders are a reliable recipe for achieving smooth model size interpolation. 

\begin{figure*}[!ht]
  \centering
  \includegraphics[width=0.8\linewidth]{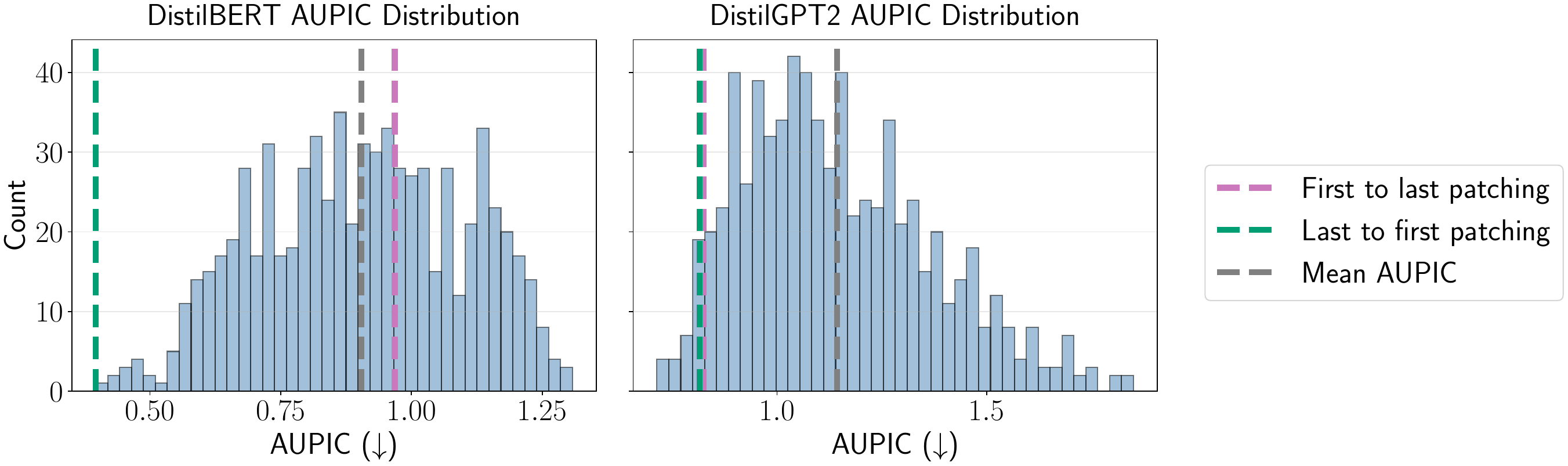}
    \caption{\textbf{AUPIC distribution across patching order permutations for DistilBERT and DistilGPT2.} Across the $720$ orderings, last-to-first patching is at or near the optimum on both models, suggesting
 it is a simple and reliable default recipe for model size interpolation.}
  \label{fig:aupic-distilbert-distilgpt}
\end{figure*}

\begin{figure*}[!ht]
  \centering
  \includegraphics[width=\linewidth]{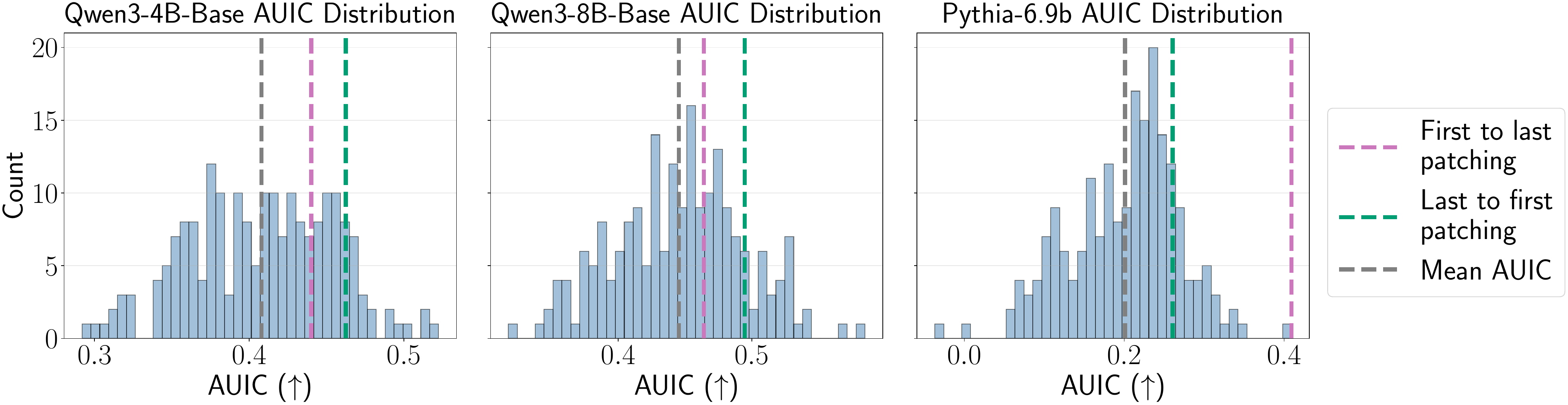}
  \caption{
  \textbf{Distribution of AUIC for different patching orders.} Across $200$ orderings, first to last and last to first patching have relatively high AUIC, but last-to-first patching does not have the highest AUIC for Qwen models despite having the lowest AUPIC.
  }
  \label{fig:permutation-comparison-llms-distribution}
\end{figure*}

\begin{figure*}[!ht]
  \centering
  \includegraphics[width=\linewidth]{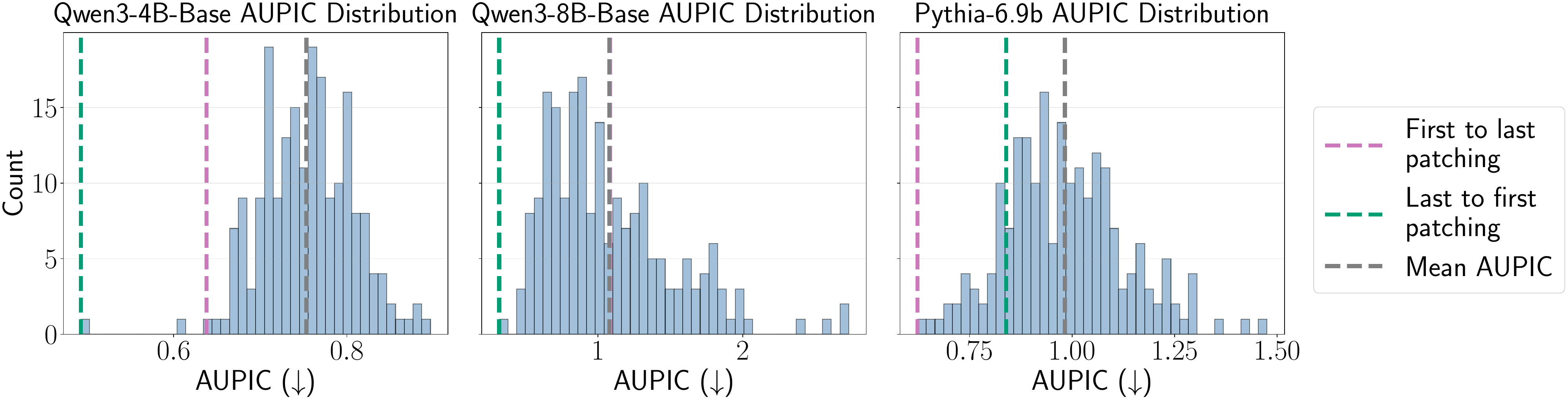}
  \caption{
  \textbf{Distribution of AUPIC for different patching orders.}
  Across the $200$ orderings, either last to first or first to last patching often yields the highest AUPIC, suggesting it is a simple and reliable default recipe for model size interpolation.
  }
  \label{fig:permutation-comparison-aupic-llms-distribution}
\end{figure*}

\begin{figure*}[!ht]
  \centering
  \includegraphics[width=\linewidth]{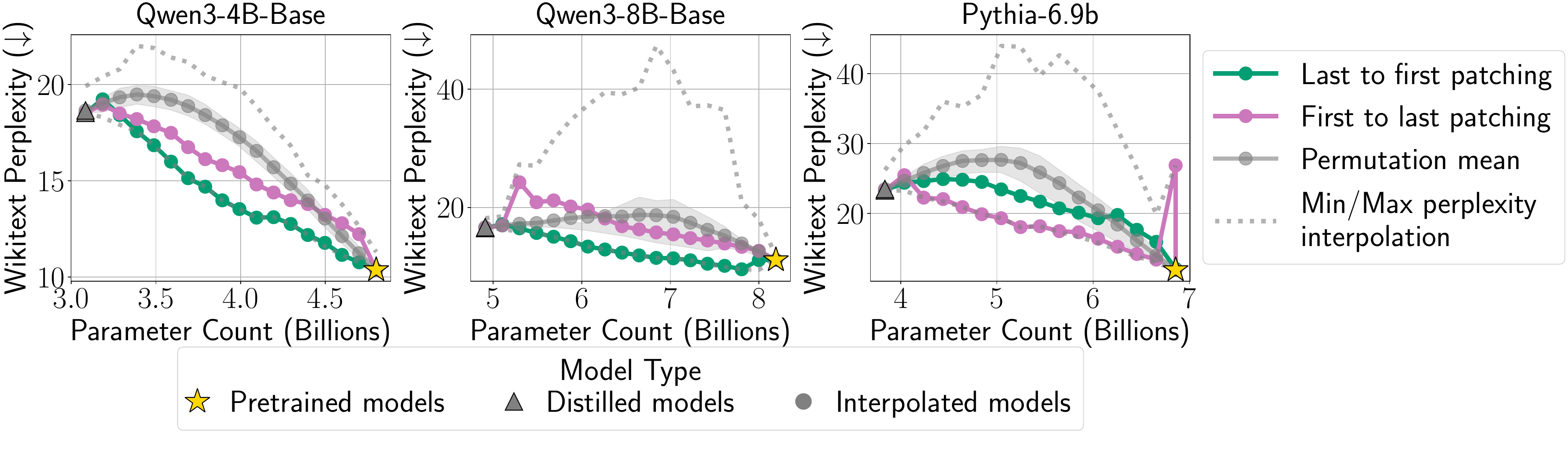}
  \caption{
  \textbf{Wikitext perplexity for different patching orders.}
  Across 200 orderings, either last to first patching and first to last patching achieve smooth interpolation performance, suggesting that they are generalizable recipes that work well across models. 
  On the other hand, we observe that a naive patching order can sometimes result in poor interpolation performance.
  Shaded bands show the (25-75) inter-quartile range over all $200$ orderings.
  }
  \label{fig:permutation-comparison-wikitext-llms}
\end{figure*}

\newpage

\section{Additional KLPatch Results}\label{apx:klpatch-additional}

We present additional results for KLPatch compared to baselines, estimated runtimes, and performance on generation tasks and Wikitext.

\subsection{KLPatch Trajectories are Best- or Near-Best-Performing}\label{apx:klpatch-baselines}

\paragraph{Setup.} We compute the AUIC and AUPIC for the $200$ random orderings from Section \ref{sec:experiment:patching_order}, as well as first to last, last to first, and KLPatch patching orders. We  report the \emph{AUIC percentile}, or the percent of total patching orders in the subset $S \subseteq \mathfrak{S}_N$ of orderings tested that have AUIC less than or equal to that of a given patching order $\pi$: 
$$\text{AUIC Percentile}(\pi) = \frac{1}{\vert S_N \vert}\sum_{\pi_i\in S_N} \mathbf{1}\{\text{AUIC}(\pi_i) \leq \text{AUIC}(\pi)\} $$
We also compute the AUPIC percentile (defined in Section \ref{apx:klpatch-vs-global}) over $S$ for each of the trajectories $\pi$ being considered. We note that the percentile is with respect to $203$ patching total patching orders (the set of $200$ random orderings, first to last, last to first, and KLPatch).

\paragraph{Results.} KLPatch produces the best or near-best trajectories in terms of AUIC and AUPIC across models (Tables \ref{tab:klpatch_baselines_auic}-\ref{tab:klpatch_baselines_aupic}). This shows that KLPatch is a reliable method for finding patching orders with high downstream performance.
\vspace{-0.3cm}
\begin{table}[!ht]
\centering
\resizebox{0.7\linewidth}{!}{%
\small
\begin{tabular}{llcc}
\toprule
\textbf{Model} & \textbf{Interpolation} & \textbf{AUIC ($\uparrow$)} & \textbf{AUIC Percentile ($\uparrow$)} \\
\midrule
              & Maximum AUIC Interpolation & 0.580 & 100\% \\
              & Maximum AUIC Permutation & 0.533 & 100\% \\
Qwen3-4B-Base & First to Last Patching & 0.441 & 69.46\% \\
              & Last to First Patching & 0.462 & 89.16\% \\        & KLPatch Path & 0.533 & 100\% \\
\midrule
              & Maximum AUIC Interpolation & 0.637 & 100\% \\
              & Maximum AUIC Permutation & 0.585 & 100\% \\
Qwen3-8B-Base & First to Last Patching & 0.464 & 63.55\% \\
              & Last to First Patching & 0.495 & 85.71\% \\        & KLPatch Path & 0.571 & 99.51\% \\
\midrule
              & Maximum AUIC Interpolation & 0.459 & 100\% \\
              & Maximum AUIC Permutation & 0.410 & 100\% \\
Pythia-6.9b   & First to Last Patching & 0.410 & 100\% \\
              & Last to First Patching & 0.261 & 82.27\% \\        & KLPatch Path & 0.379 & 99.51\% \\
\bottomrule
\end{tabular}}
\vspace{5pt}
\caption{\textbf{AUIC results for different interpolations.} KLPatch recovers a patching order with best (Qwen3-4B-Base) or near-best (Qwen3-8B-Base, Pythia-6.9b) performance in terms of AUIC.
\label{tab:klpatch_baselines_auic}}
\vspace{-0.5cm}
\end{table}

\begin{table}[!ht]
\centering
\small
\resizebox{0.7\linewidth}{!}{%
\begin{tabular}{llcc}
\toprule
\textbf{Model} & \textbf{Interpolation} & \textbf{AUPIC ($\downarrow$)} & \textbf{AUPIC Percentile ($\uparrow$)} \\
\midrule
              & Minimum AUPIC Interpolation & 0.481 & 100\% \\
              & Minimum AUPIC Permutation & 0.493 & 100\% \\
Qwen3-4B-Base & First to Last Patching & 0.637 & 69.46\% \\
              & Last to First Patching & 0.493 & 100\% \\        & KLPatch Path & 0.539 & 99.51\% \\
\midrule
              & Minimum AUPIC Interpolation & 0.277 & 100\% \\
              & Minimum AUPIC Permutation & 0.318 & 100\% \\
Qwen3-8B-Base & First to Last Patching & 1.084 & 41.38\% \\
              & Last to First Patching & 0.318 & 100\% \\        & KLPatch Path & 0.357 & 99.51\% \\
\midrule
              & Minimum AUPIC Interpolation & 0.545 & 100\% \\
              & Minimum AUPIC Permutation & 0.567 & 100\% \\
Pythia-6.9b   & First to Last Patching & 0.623 & 99.51\% \\
              & Last to First Patching & 0.839 & 83.74\% \\        & KLPatch Path & 0.567 & 100\% \\
\bottomrule
\end{tabular}}
\vspace{5pt}
\caption{\textbf{AUPIC results for different interpolations.} KLPatch recovers a patching order with the best (Pythia-6.9b) or near-best (Qwen3-4B-Base, Qwen3-8B-Base) performance out of the permutations tested in terms of AUPIC.
\label{tab:klpatch_baselines_aupic}}
\end{table}

\clearpage

\subsection{KLPatch Significantly Reduces Runtime}\label{apx:klpatch-runtime}

We compare the runtime required for KLPatch $(O(N^2))$ to that of exhaustive search $(O(2^N))$ for each of the three models tested. We estimate exhaustive search runtime by recording the time it takes to compute KL divergence to the teacher on the calibration set for each intermediate model size and  multiplying it by the number of evaluations needed at that model size. As shown in Table \ref{tab:klpatch-runtime-comparison}, KLPatch achieves over $548\times$ speedup over exhaustive search for all three models, and the speedup increases as the number of student layers increases (from $17$ student layers in the Pythia model to $19$ in both Qwen models). Furthermore, exhaustive search takes a computationally infeasible amount of time as the number of student layers increases.

\begin{table}[!ht]
\centering
\small
\resizebox{\linewidth}{!}{%
\begin{tabular}{lccc}
\toprule
\textbf{Model} & \textbf{Exhaustive Search Runtime (minutes)} & \textbf{KLPatch Runtime (minutes)} & \textbf{Speedup From KLPatch} \\
\midrule
Qwen3-4B-Base & $64302.079$ & $37.337$ & $1722.209\times$ \\
Qwen3-8B-Base & $65327.869$ & $38.097$ & $1714.785\times$ \\
Pythia-6.9b & $14134.826$ & $25.760$ & $548.704\times$ \\
\bottomrule
\end{tabular}}
\vspace{0.2cm}
\caption{\textbf{KLPatch produces over $500-1700\times$ speedup over exhaustive search.} We estimate exhaustive search runtime by computing a single KL divergence evaluation pass on each intermediate model size and multiplying it by the number of evaluations needed at that model size.}
\label{tab:klpatch-runtime-comparison}
\vspace{-0.5cm}
\end{table}

\subsection{Generation Tasks}\label{apx:klpatch-generation}
Figure \ref{fig:klpatch-generation} shows that KLPatch closely tracks the best performing permutation and achieves better model size interpolation compared to last to first and first to last patching when we patch fewer student layers, but eventually converges to the best performing patching order trajectory on the generation tasks. 

\begin{figure*}[!ht]
  \centering
  \includegraphics[width=0.9\linewidth]{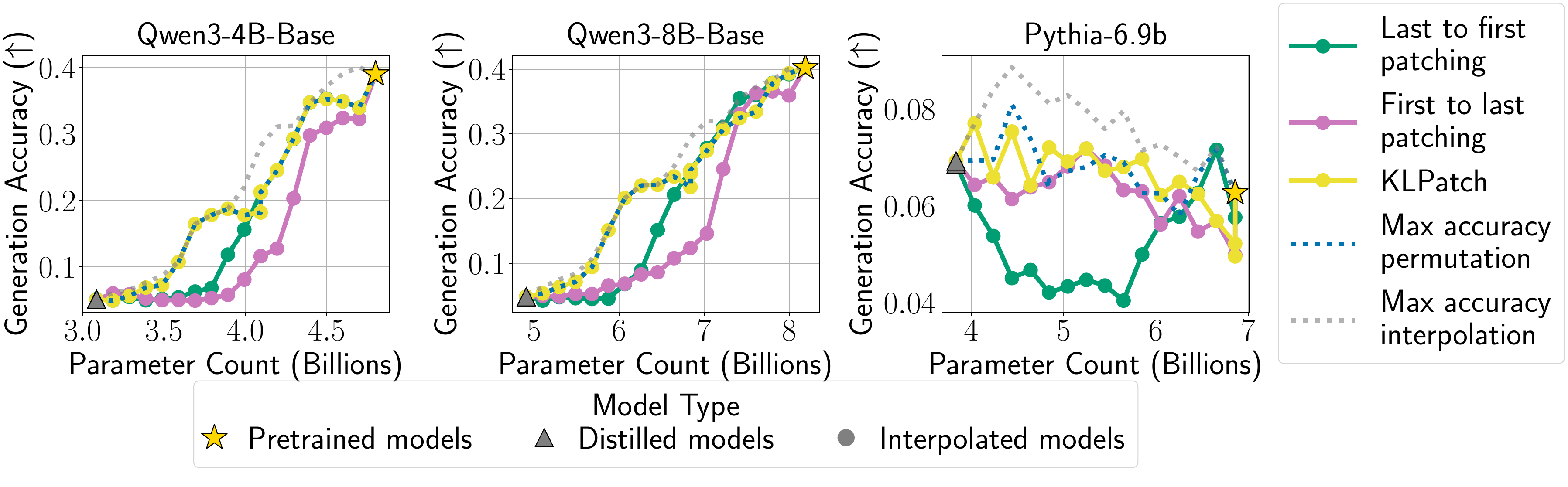}
  \caption{
  \textbf{KLPatch for Qwen3-4B-Base, Qwen3-8B-Base, and Pythia-6.9b}
  Across models, KLPatch tracks the max accuracy permutation and outperforms or tracks the last to first patching on generation tasks.}
\vspace{-0.3cm}
  \label{fig:klpatch-generation}
\end{figure*}

\subsection{Wikitext Perplexity}\label{apx:klpatch-perplexity}

Figure \ref{fig:klpatch-wikitext} shows that KLPatch often tracks closely to the best performing patching order among last to first and first to last patching orders on Wikitext. 

\begin{figure*}[!ht]
  \centering
  \includegraphics[width=0.9\linewidth]{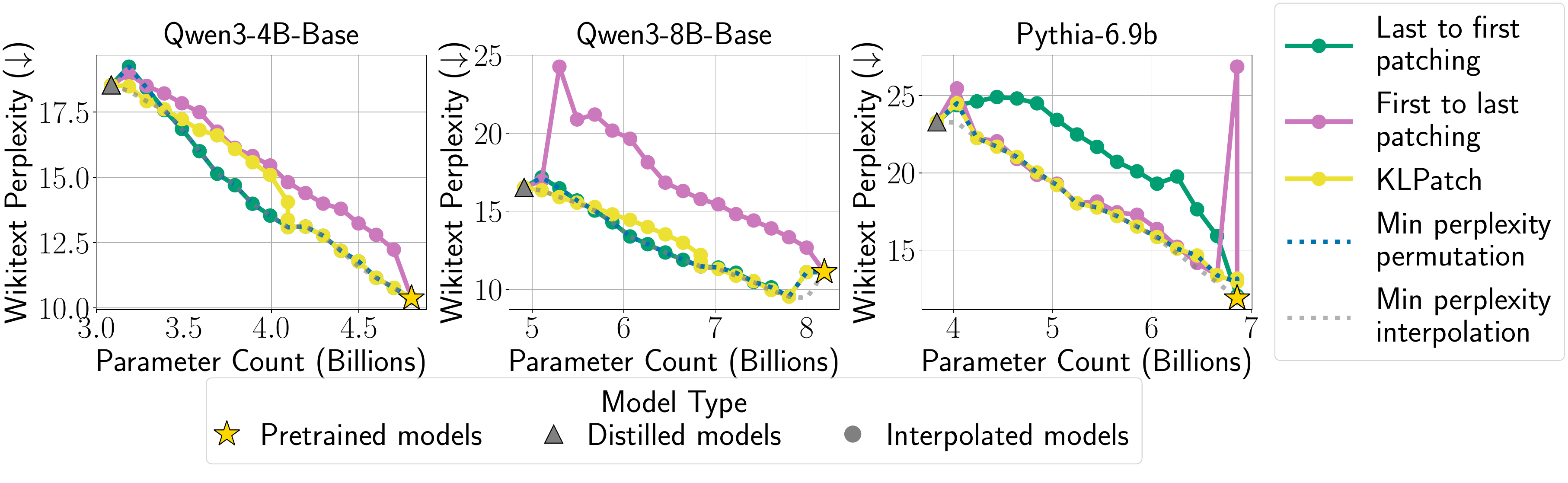}

  \caption{
  \textbf{KLPatch for Qwen3-4B-Base, Qwen3-8B-Base, and Pythia-6.9b}
  Across models, KLPatch tracks the last to first patching on perplexity.
    }

  \label{fig:klpatch-wikitext}
\end{figure*}
\clearpage

\section{Llama-3.2-3B KLPatch Results}\label{apx:llama}
In this section, we discuss the KLPatch of Llama 3.2 3B, which requires rather special considerations to achieve smooth interpolation. 

Figure \ref{fig:klpatch-llama} shows that KLPatch severely underperforms first to last patching, suggesting that KLPatch might not universally improve over naive patching orders. 
However, after enforcing a heuristic constraint that the first student layer (layer 1) be patched first, then the KLPatch algorithm recovers the first to last patching. 

One possible reason for this discrepancy could be the initialization of the student model. 
~\citet{kangaslahti2026boomerang} initialize the student model by keeping the first two layers from the teacher model. 
In contrast, when distilling other student models, namely Qwen 4B Base, Qwen 8B Base, and Pythia 6.9b, the student is initialized by keeping the last two layers. 
We suspect these differences also contribute to the differences in patching order. 
In fact, this difference in student model initialization also led ~\citet{kangaslahti2026boomerang} to adopt the first to last patching strategy rather than the last to first patching strategy to achieve smooth interpolation.
While these results show that Llama models require special treatment of KLPatch to achieve good model size interpolation performance, they also highlight that the effect of patching order is strongly influenced by the student model initialization. 

\begin{figure*}[!ht]
  \centering
  \includegraphics[width=\linewidth]{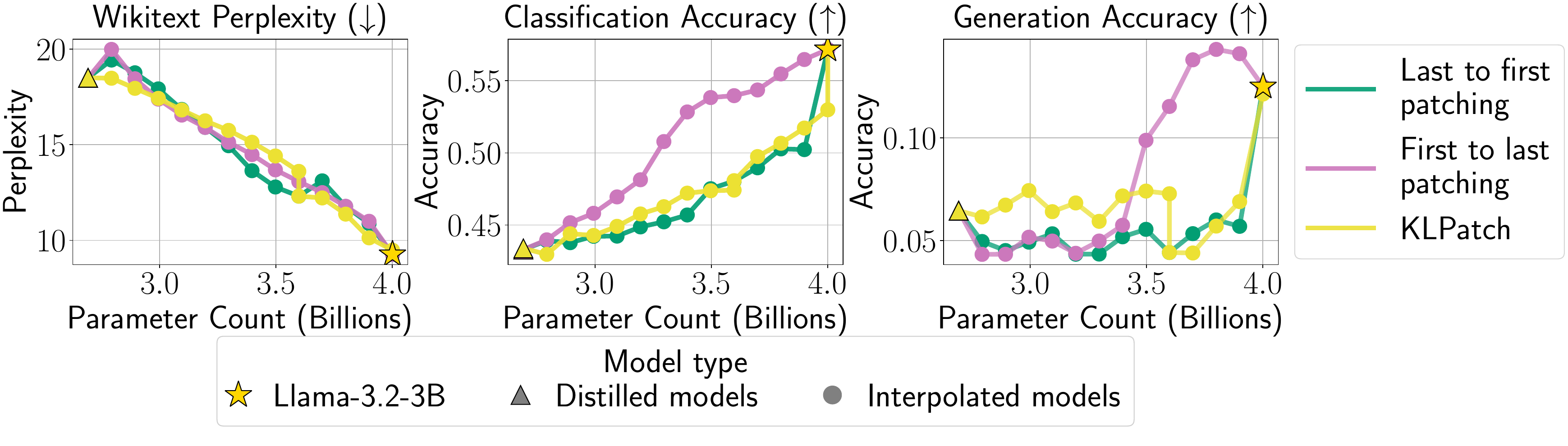}
  \caption{
    \textbf{Standard KLPatch for Llama-3.2-3B.}
    The standard KLPatch algorithm does not recover the optimal patching order.
  }
  \label{fig:klpatch-llama}
\end{figure*}

\begin{figure*}[!ht]
  \centering
  \includegraphics[width=\linewidth]{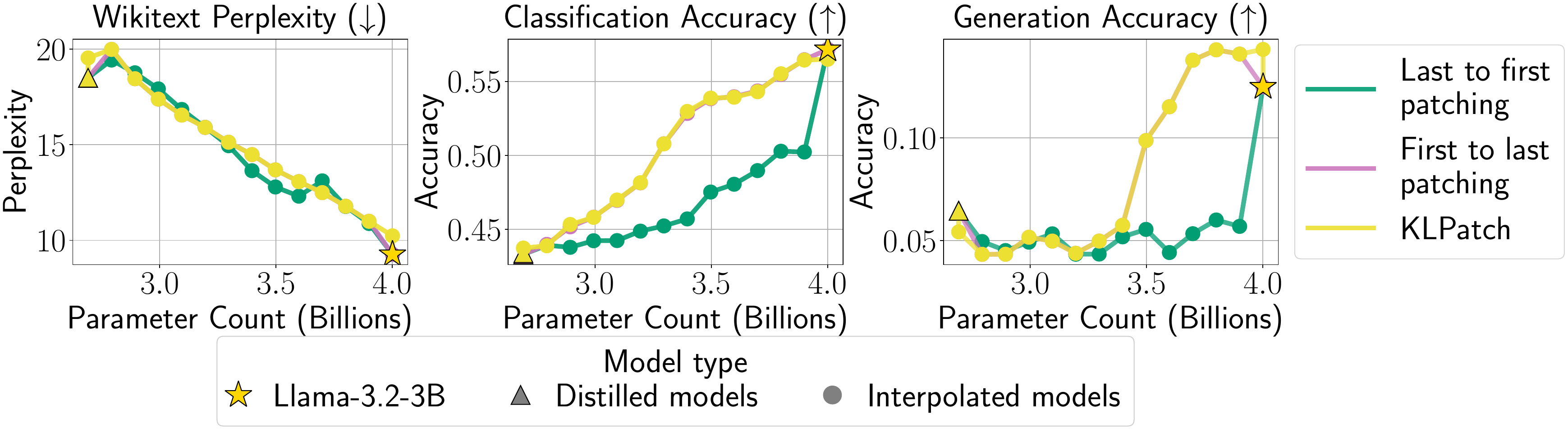}
  \caption{
    \textbf{KLPatch for Llama-3.2-3B by first patching Layer 1.}
    Enforcing layer $1$ as the first patched layer allows KLPatch to recover the optimal patching order and improves performance by avoiding an initially suboptimal patching choice.
  }
  \label{fig:klpatch-llama-fixed}
\end{figure*}

\section{KLPatch Ablations}\label{apx:klpatch-ablations}

We show additional ablations on the KLPatch algorithm, including using metrics other than KL Divergence (\ref{apx:klpatch-metric-ablation}) and a non-iterative greedy approach (\ref{apx:klpatch-non-iterative})

\subsection{KLPatch Performs Best with KLDivergence}\label{apx:klpatch-metric-ablation}

\paragraph{Setup.} We compare the standard KLPatch setup with other metrics used to compute $\ell_i$ in Algorithm \ref{alg:greedy-patching-order} in place of KL divergence. Specifically, we test last-layer cosine similarity between the student and teacher, $\mathcal{L}_{\cos}^{(N)}(x_{<j};\vtheta_\mT,\vtheta_\mS)$, which we denote as cosine patching, as well as perplexity, which we call perplexity patching. We compute both of these metrics on the same validation set used to calculate KL Divergence in the standard KLPatch setup.

\paragraph{Results.} KLPatch produces the strongest interpolation behavior across models and classification and generation tasks (Figures \ref{fig:klpatch-metric-ablation-classification}-\ref{fig:klpatch-metric-ablation-wikitext}). Both KLPatch and perplexity patching signficantly outperform cosine patching. 
Perplexity patching has somewhat similar performance to KLPatch, especially on classification tasks, as for strong teachers, we expect that KL divergence to the teacher provides very similar signal to perplexity.
However, while perplexity patching is a strong baseline, KLPatch provides the most consistent results across models, especially in the generation setting.

\begin{figure*}[!ht]
  \centering
  \includegraphics[width=\linewidth]{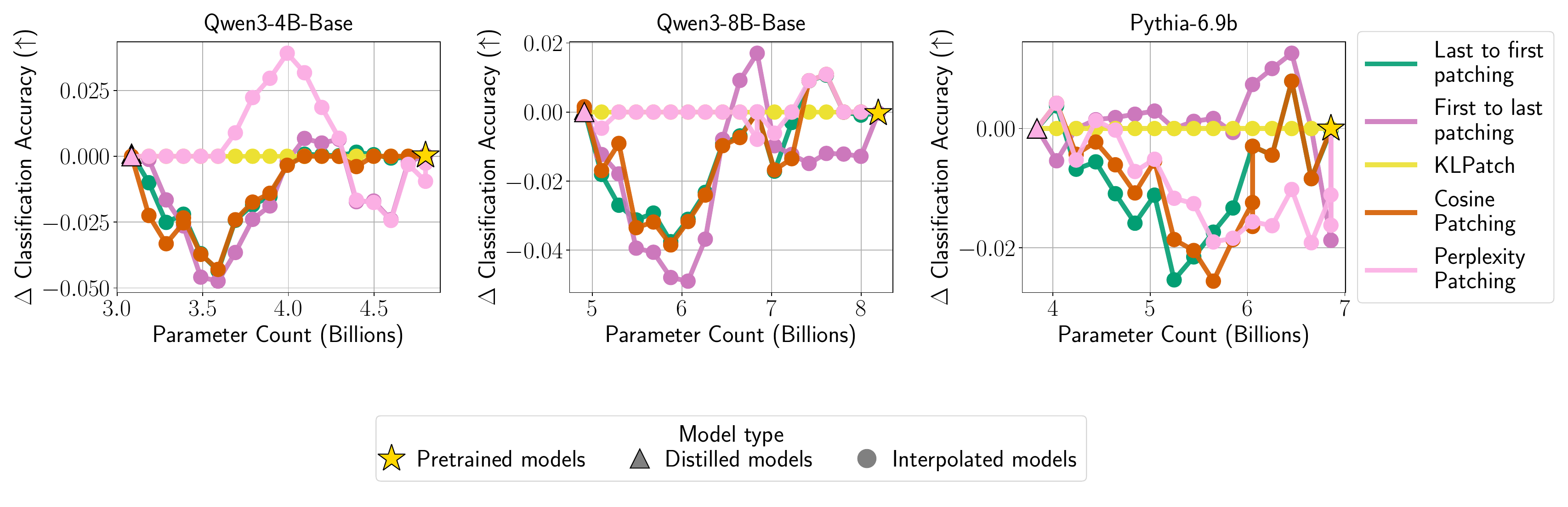}
  \caption{
  \textbf{Change in classification accuracy with respect to KLPatch for ablated metrics.} 
  Ablating the metrics used to compute $\ell_i$ in Algorithm \ref{alg:greedy-patching-order} produces interpolation curves worse than or on par with KLPatch on classification tasks.}
  \label{fig:klpatch-metric-ablation-classification}
  \vspace{-0.5cm}
\end{figure*}

\begin{figure*}[!ht]
  \centering
  \includegraphics[width=\linewidth]{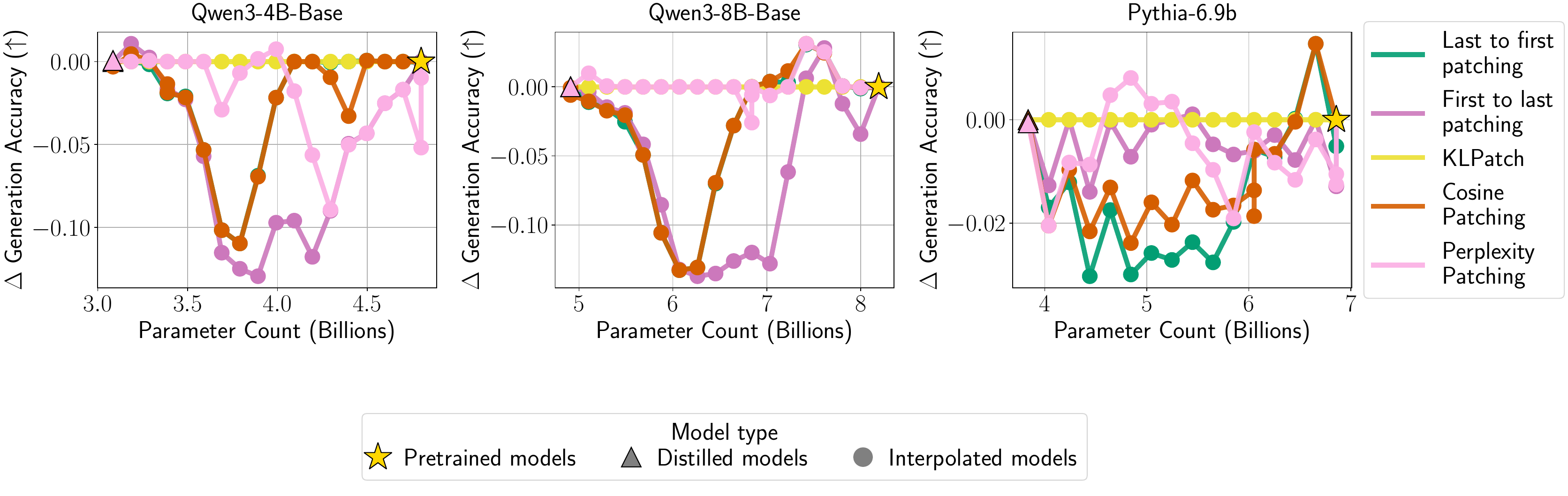}
  \caption{
  \textbf{Change in generation accuracy for KLPatch with ablated metrics.}
  For generation tasks, cosine patching and perplexity patching baselines have worse or similar performance to KLPatch across models.}
  \label{fig:klpatch-metric-ablation-generation}
  \vspace{-0.5cm}
\end{figure*}

\begin{figure*}[!ht]
  \centering
  \includegraphics[width=\linewidth]{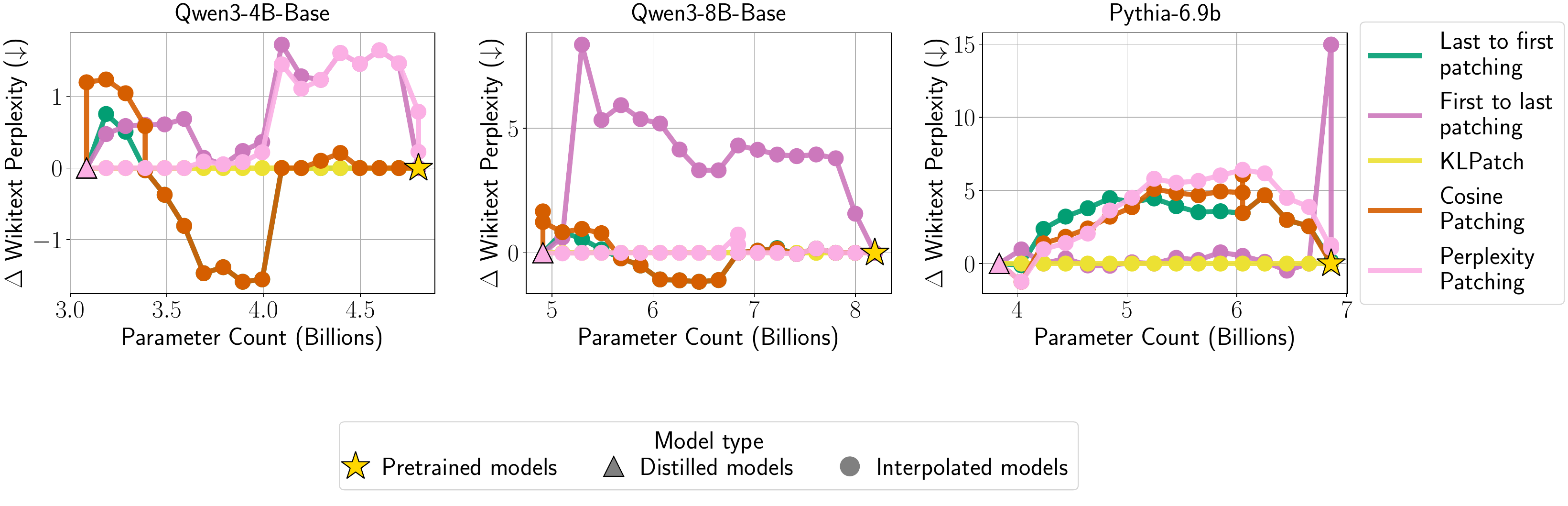}
  \caption{
  \textbf{Change in wikitext perplexity with ablated metrics for KLPatch.}
  KLPatch has similar or better wikitext perplexity to other metrics.}
  \label{fig:klpatch-metric-ablation-wikitext}
  \vspace{-0.5cm}
\end{figure*}

\subsection{Iterative KLPatch Outperforms Non-Iterative Approaches}\label{apx:klpatch-non-iterative}

\paragraph{Setup.} We test the standard KLPatch setup against non-iterative baselines. Specifically, we compare against a version of KLPatch, which we call KLInitial, that performs a single iteration of KLPatch and orders all of the layers in terms of their KL $\ell_i$ when only that single layer is patched. We also include baselines that evaluate layer importance without patching any layers. Specifically, we rank student layer importance using the use the block influence metric proposed by \citep{men2024shortgpt}, which calculates the cosine similarity between the inputs and outputs of a given layer, as layers with higher input/output cosine similarity make smaller changes to the residual stream. We then patch layers in order of highest to lowest importance. We also include the logit lens approach \citep{logitlens}, which is a popular method that maps activations to the vocabulary space by multiplying them with the unembedding matrix, as an additional baseline. For each student layer, we use the logit lens technique to compute the KL divergence between the mapped output activations of the student layer and its corresponding teacher block. We then patch student layers in order of highest to lowest KL divergence to their teacher blocks.

\paragraph{Results.} We report the results comparing KLPatch to non-iterative approachs in Figures \ref{fig:klpatch-initial-ablation-classification}-\ref{fig:klpatch-initial-ablation-wikitext}. For classification tasks and perplexity, KLPatch has performance similar to that of KLInitial for smaller interpolated model sizes and outperforms KLInitial for larger interpolated model sizes. For generation tasks, KLPatch outperforms KLInitial across model sizes. 
This indicates that information about the compounding effect of patching intermediate layers (which we obtain from KLPatch but not KLInitial) is especially useful for generation tasks and for intermediate models with more layers patched.
Block influence and logit lens underperform KLPatch and KLInitial, especially for smaller interpolated model sizes, indicating that it is challenging to obtain patching orders with comparable performance to KLPatch from initial importance or alignment information alone.

\begin{figure*}[!ht]
  \centering
  \includegraphics[width=\linewidth]{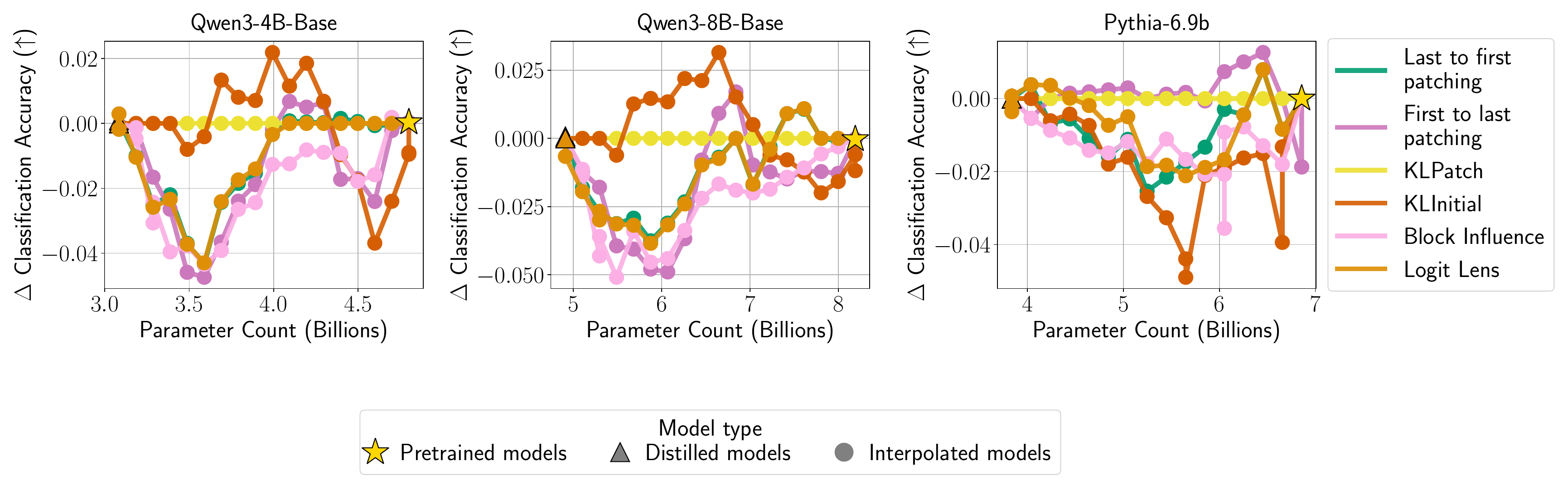}
  \caption{
  \textbf{Difference in classification accuracy between non-iterative approaches and KLPatch.}
  KLPatch has comparable or better performance than non-iterative approaches on classification tasks. }
  \label{fig:klpatch-initial-ablation-classification}
\end{figure*}

\begin{figure*}[!ht]
  \centering
  \includegraphics[width=\linewidth]{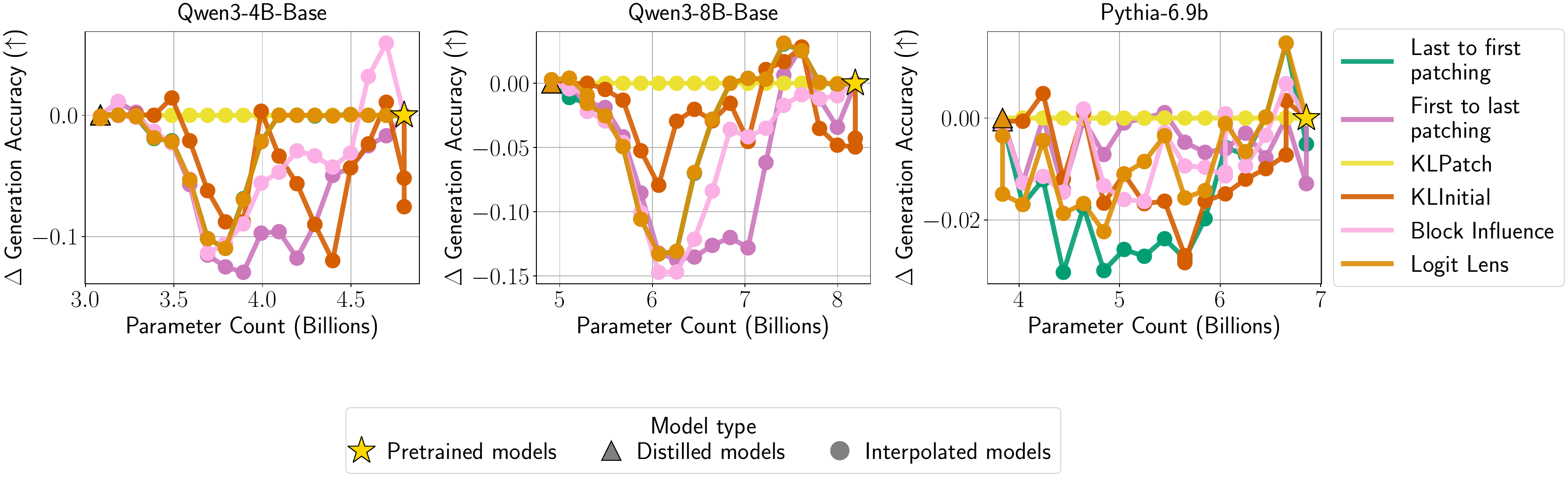}
  \caption{
  \textbf{Difference in generation accuracy between non-iterative approaches and KLPatch.}
  KLPatch has better performance than non-iterative approaches on generation tasks.}
  \label{fig:klpatch-initial-ablation-generation}
\end{figure*}

\begin{figure*}[!ht]
  \centering
  \includegraphics[width=\linewidth]{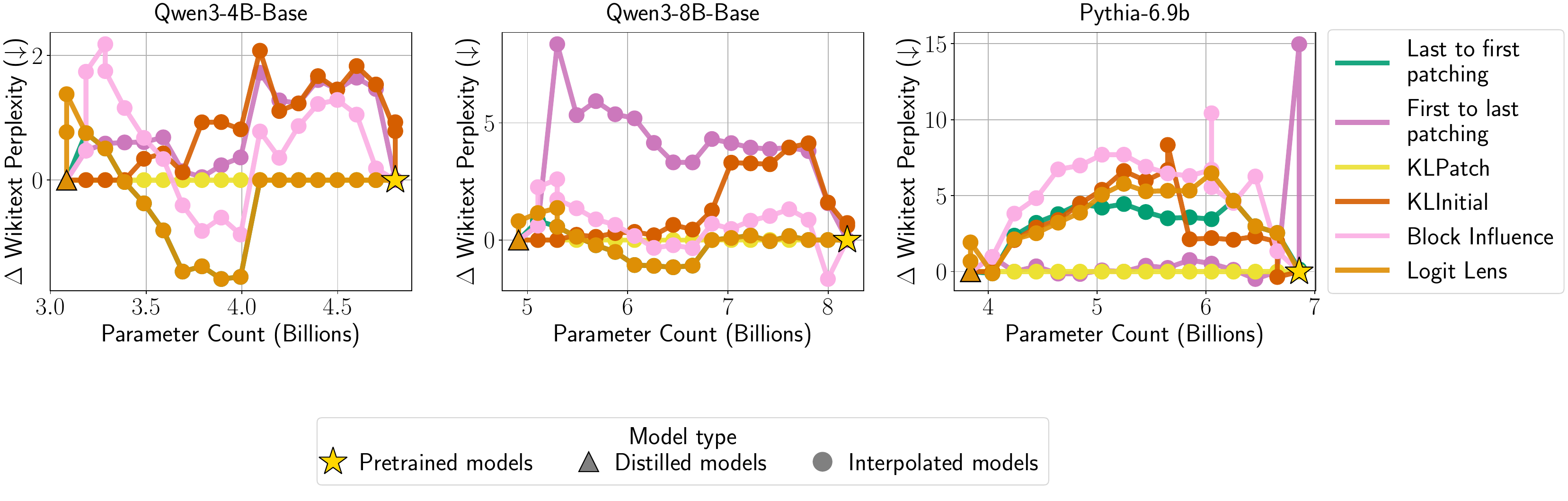}
  \caption{
  \textbf{Difference in wikitext perplexity between non-iterative approaches and KLPatch.}
  KLPatch has similar or better wikitext perplexity to non-iterative approaches.}
  \label{fig:klpatch-initial-ablation-wikitext}
\end{figure*}

\subsection{Ablating the calibration set}\label{apx:klpatch-calibration-ablation}

We test varying calibration set sizes $|\mathcal{D}_{\textrm{cal}}|$ used for evaluating KL divergence in the KLPatch algorithm and report the mean and standard deviation over 5 random seeds in Table \ref{tab:klpatch-calibration-ablation}. We find that KLPatch performance stabilizes around $|\mathcal{D}_{\textrm{cal}}|=64$ across models and downstream tasks, so we use this calibration set size in Section \ref{sec:experiment:klpatch}. 

\begin{table}[!th]
\centering
\begin{tabular}{llccc}
\toprule
Model & $|\mathcal{D}_{\textrm{cal}}|$ & Classification AUIC & Generation AUIC & AUPIC \\
\midrule
\multirow{7}{*}{Qwen3-4B-Base} & 16 & $0.486 \pm 0.066$ & $0.351 \pm 0.112$ & $0.575 \pm 0.042$ \\
 & 32 & $0.534 \pm 0.017$ & $0.428 \pm 0.021$ & $0.553 \pm 0.030$ \\
 & 64 & $0.533 \pm 0.001$ & $0.442 \pm 0.001$ & $0.539 \pm 0.001$ \\
 & 128 & $0.533 \pm 0.001$ & $0.442 \pm 0.002$ & $0.539 \pm 0.001$ \\
 & 256 & $0.533 \pm 0.001$ & $0.442 \pm 0.002$ & $0.539 \pm 0.000$ \\
 & 512 & $0.533 \pm 0.001$ & $0.442 \pm 0.002$ & $0.539 \pm 0.000$ \\
 & 1024 & $0.533 \pm 0.001$ & $0.442 \pm 0.002$ & $0.539 \pm 0.000$ \\
\midrule
\multirow{7}{*}{Qwen3-8B-Base} & 16 & $0.515 \pm 0.079$ & $0.414 \pm 0.108$ & $0.611 \pm 0.333$ \\
 & 32 & $0.553 \pm 0.027$ & $0.471 \pm 0.069$ & $0.449 \pm 0.180$ \\
 & 64 & $0.570 \pm 0.001$ & $0.502 \pm 0.002$ & $0.357 \pm 0.007$ \\
 & 128 & $0.571 \pm 0.002$ & $0.503 \pm 0.001$ & $0.361 \pm 0.000$ \\
 & 256 & $0.571 \pm 0.002$ & $0.503 \pm 0.001$ & $0.361 \pm 0.000$ \\
 & 512 & $0.570 \pm 0.001$ & $0.503 \pm 0.001$ & $0.361 \pm 0.000$ \\
 & 1024 & $0.569 \pm 0.000$ & $0.503 \pm 0.000$ & $0.361 \pm 0.000$ \\
 \midrule
 \multirow{7}{*}{Pythia-6.9B} & 16 & $0.376 \pm 0.011$ & $0.067 \pm 0.002$ & $0.5666 \pm 0.066$ \\
 & 32 & $0.378 \pm  0.007$ & $0.067 \pm 0.003$ & $0.565 \pm 0.023$ \\
 & 64 & $0.379 \pm 0.006$ & $0.067 \pm 0.002$ & $0.567 \pm 0.017$ \\
 & 128 & $0.379 \pm 0.004$ & $0.067 \pm 0.001$ & $0.567 \pm 0.016$ \\
 & 256 & $0.379 \pm 0.001$ & $0.067 \pm 0.001$ & $0.567 \pm 0.001$ \\
 & 512 & $0.379 \pm 0.000$ & $0.067 \pm 0.000$ & $0.567 \pm 0.000$ \\
 & 1024 & $0.379 \pm 0.000$ & $0.067 \pm 0.000$ & $0.567 \pm 0.000$ \\ \\
\bottomrule
\end{tabular}
\vspace{0.2cm}
\caption{\textbf{KLPatch performance summary with varying calibration set size.} Values are reported as mean $\pm$ standard deviation, computed over 5 random seeds.}
\label{tab:klpatch-calibration-ablation}
\end{table}

\end{document}